\documentclass{article}

\usepackage[preprint,nonatbib]{neurips_2026} 
\usepackage{cite}                    

\usepackage[utf8]{inputenc} 
\usepackage[T1]{fontenc}    
\usepackage[
  colorlinks=true,
  urlcolor=blue,
  linkcolor=black,
  citecolor=black
]{hyperref}

\usepackage{array}
\usepackage{url}            
\usepackage{booktabs}       
\usepackage{amsfonts}       
\usepackage{nicefrac}       
\usepackage{microtype}      
\usepackage{multirow}       
\usepackage{makecell}       
\usepackage{amsmath,amssymb,amsthm,graphicx}
\usepackage[ruled,vlined,linesnumbered]{algorithm2e}
\usepackage{algpseudocode}   
\usepackage{wrapfig} 
\usepackage{tikz}
\usepackage[table]{xcolor}

\definecolor{rowblue}{RGB}{230,242,255}

\newtheorem{lemma}{Lemma}
\newtheorem{theorem}{Theorem}

\title{IB-Flow: Information Bottleneck-Guided CFG Distillation for Few-Step Text-to-Image Generation
\thanks{\url{https://github.com/zhjy2016/IBFlow}}
}

\author{%
  Yiting Wang\textsuperscript{1} \quad 
  Jingyi Zhang\textsuperscript{2} \quad 
  Wenhu Zhang\textsuperscript{3} \quad 
  Ke Chao\textsuperscript{4} \\ 
  Yves Liang\textsuperscript{1
  } \quad 
  Kun Cheng\textsuperscript{2} \quad 
  Kang Zhao\textsuperscript{2}\thanks{Corresponding author.} \\
  \\ 
  \textsuperscript{1}Tsinghua University \quad \textsuperscript{2}Wan Team, Alibaba Group  \quad
  \textsuperscript{3}HKUST \quad \textsuperscript{4}Beijing Normal University 
}

\begin{document}

\maketitle

\begin{abstract}
While large-scale text-to-image generative models have achieved unprecedented visual performance, their inherent reliance on multi-step iterative solvers incurs severe inference latency. Few-step distillation targeting the Classifier-Free Guidance (CFG) trajectory has emerged as the prevalent dual-dimensional compression paradigm. However, existing frameworks remain subjugated by a coarse-grained blind injection paradigm that perpetually enforces a globally static guidance strength while indiscriminately sampling the supervisor timestep. This state-agnostic design completely disregards the intrinsic nature of image generation as a dynamic evolutionary process characterized by progressive entropy reduction, which not only restricts the performance boundary of few-step compression but also precipitates severe CFG over-conditioning artifacts. To transcend these limitations, we re-examine the distillation procedure through the theoretical lens of Information Theory, formally modeling it as a dynamic mutual information game constrained by the Information Bottleneck (IB) principle. Specifically, we dismantle traditional blind assumptions via a dual-track adaptive framework. To determine the injection target, we propose an instance-aware selection mechanism that transmutes the intractable KL divergence constraint into a zero-overhead closed-form solution predicated on the local vector field norm. To regulate the injection strength, we introduce an entropy-aware schedule that dynamically decays alongside the Signal-to-Noise Ratio (SNR), applying maximal thrust for initial structural anchoring before smoothly reverting to the natural manifold to refine micro-details. Extensive empirical evaluations corroborate that our framework fundamentally eradicates over-conditioning artifacts, shattering the performance ceiling to achieve state-of-the-art (SOTA) generative fidelity under extremely stringent 2-step configurations.


\end{abstract}  
\begin{figure*}[!t]
    \centering
    \includegraphics[width=1\textwidth]{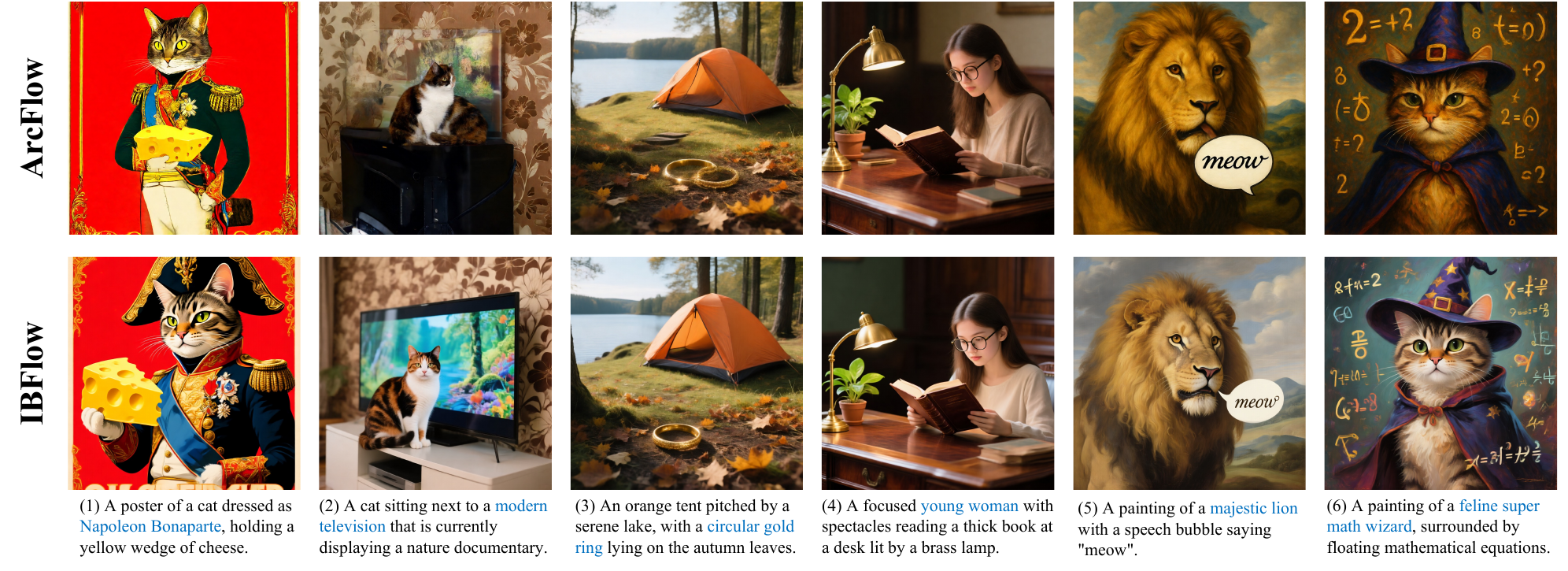} 
    
    \caption{Demonstrating extreme few-step image generation at only 2 NFE, our IBFlow seamlessly improves perceptually orthogonal qualities compared to Arcflow, which improves from: (1,2) structural fidelity, (3,4) textural details, and (5,6) chromatic fidelity.}
    \label{fig:teaser}
    \vspace{-0.7cm}
\end{figure*}

\section{Introduction}
\label{sec:intro}

Recent advances in large-scale generative models, particularly those based on Diffusion Models \cite{ho2020denoising, rombach2022high} and Flow Matching \cite{lipman2022flow}, have achieved unprecedented success in text-to-image generation. By constructing continuous probability paths from simple prior distributions to complex data distributions, they enable highly realistic visual synthesis. However, these models heavily rely on iterative numerical solvers during the sampling phase, typically requiring dozens or even hundreds of Neural Function Evaluations (NFEs). This sequential denoising nature leads to severe inference latency and massive computational overhead, fundamentally limiting their deployment in real-time interactive applications.

To break the inference efficiency bottleneck of diffusion models, the core objective of few-step generation fundamentally necessitates a dual-dimensional compression paradigm, wherein step distillation truncates the multi-step trajectory into a mere one or two steps \cite{yin2024one, salimans2022progressive, song2023consistency}, while CFG distillation simultaneously compresses the conditional axis to eliminate the doubled forward-pass overhead imposed by Classifier-Free Guidance (CFG) \cite{ho2022classifier}. Recent pioneering studies such as Decoupled DMD \cite{yin2024dmd2, liu2025decoupled} unveil a profound entanglement between these two dimensions by demonstrating that CFG distillation transcends mere computational reduction to act as the quintessential spear propelling the model toward large-stride generation and significantly elevating the quality ceiling of few-step synthesis. Consequently, the paramount challenge transitions into losslessly condensing the teacher model's burdensome CFG-dependent guidance capability into the student model through a rigorous feature injection constraint co-determined by the \textbf{supervisor timestep} and the \textbf{guidance strength}.

Despite significant progress in compressing guidance trajectories, existing frameworks rely on a blind injection paradigm that perpetually enforces a globally static guidance strength while indiscriminately sampling the supervisor timestep from empirical priors. This state-agnostic design disregards the intrinsic nature of image generation as a dynamic, progressive entropy-reduction process. In the high-entropy initial stages, the chaotic representation urgently demands conservative structural anchoring; blindly leaping toward distant targets creates a massive semantic gap that precipitates severe noise degradation \cite{liu2025decoupled}. Conversely, during the low-entropy terminal stages when the focus shifts to micro-detail refinement, maintaining rigid guidance disrupts the converged natural manifold, manifesting severe CFG over-conditioning artifacts like color oversaturation and texture sharpening. Ultimately, this rigid adherence to static injection fails to unlock the CFG engine's potential, emerging as the primary impediment to the performance boundary of few-step generative models.

To resolve these limitations, we re-examine the few-step distillation procedure through the theoretical lens of Information Theory to propose a dynamic information injection framework governed by the Information Bottleneck (IB) principle \cite{tishby1999information}. Specifically, we dismantle the traditional blind assumptions from two dimensions: \textbf{injection target} and \textbf{injection strength}. On the one hand, to answer \textit{what target to inject}, we reformulate the selection of the supervisor timestep as a capacity-constrained mutual information maximization problem that transmutes the intractable KL divergence constraint into a zero-overhead closed-form solution predicated on the local vector field norm. This instance-aware mechanism dynamically enforces conservative structural anchoring early on before facilitating advanced detail absorption tailored to specific sample complexities. On the other hand, to answer \textit{how much strength to inject}, we adhere to the minimal sufficient representation tenet of IB to introduce an entropy-aware guidance strength schedule. By dynamically decaying alongside the Signal-to-Noise Ratio (SNR), it applies maximal thrust for initial symmetry breaking before smoothly reverting to the unconditional natural manifold, thereby eradicating over-conditioning artifacts. Extensive empirical evaluations corroborate that our framework achieves superlative generative fidelity under extremely stringent low-step configurations by completely shattering the performance ceiling inherent in traditional static distillation paradigms. 

The main contributions of this work are summarized as follows:

(1) We expose the fundamental limitations of blind information injection in traditional CFG distillation from first principles, formally modeling the few-step distillation as a dynamic mutual information game constrained by the Information Bottleneck for the first time.

(2) We propose an \textit{instance-aware} dynamic target selection mechanism. Through rigorous mathematical derivation, we prove that the conditional vector field norm can serve as an exact proxy for semantic uncertainty, yielding a zero-overhead closed-form optimal solution.

(3) We introduce an \textit{entropy-aware} dynamic guidance strength scheduling strategy, which perfectly balances the alignment of conditional generation with the fidelity of the natural image manifold, effectively eradicating CFG over-conditioning artifacts.

(4) Extensive quantitative and qualitative experiments demonstrate that our method achieves State-of-the-Art (SOTA) generation quality and text-alignment under the extreme 2-step generation setting.

\section{Related Work}
\label{sec:related_work}

\subsection{Diffusion and Flow Matching Models}
\label{subsec:rw_diffusion_flow}
Diffusion models achieve state-of-the-art generative performance by learning to reverse an iterative noise-injection process \cite{sohl2015deep, ho2020denoising, song2020score}. Building on this, Flow Matching (FM) \cite{lipman2022flow} and Rectified Flow \cite{liu2022flow} provide a unified framework based on continuous normalizing flows. By directly regressing the vector field from noise to data, FM yields straighter, optimal transport-guided trajectories that simplify training and elevate large-scale text-to-image synthesis. However, both SDE-based diffusion and ODE-based FM require dozens to hundreds of neural function evaluations (NFEs) during sampling. This heavy computational burden fundamentally hinders their real-time deployment \cite{liu2025decoupled}.

\subsection{Few-Step Diffusion Distillation}
\label{subsec:rw_few_step}
To address this latency, extensive research focuses on compressing multi-step models into few-step generators \cite{salimans2022progressive, gu2025starflow, geng2025mean}. Key routes include consistency distillation \cite{song2023consistency, wang2024phased, zheng2025large}, enforcing trajectory self-consistency, and adversarial distillation \cite{sauer2024add, lin2024sdxl}, using GAN objectives to match real data distributions. Among these, score-based methods like Diff-Instruct \cite{luo2023diff} and Distribution Matching Distillation (DMD) \cite{yin2024dmd, yin2024dmd2} are particularly promising \cite{liu2025decoupled}. DMD minimizes an Integral Kullback-Leibler (IKL) divergence between student and teacher distributions, providing an elegant theoretical framework. Building on this, recent DMD variants and trajectory-matching approaches have successfully achieved high-resolution few-step generation \cite{liu2023instaflow, zhu2024slimflow, kim2024simple}. 

\subsection{Classifier-Free Guidance and its Distillation}
\label{subsec:rw_cfg}
Classifier-Free Guidance (CFG) \cite{ho2022classifier, saharia2022photorealistic, ho2022video} is pivotal for achieving high-fidelity text-to-image alignment but doubles the per-step computational cost. Integrating CFG into distillation is therefore crucial, as teacher CFG patterns act as the primary driver for few-step conversion \cite{liu2025decoupled, liu2026self, kim2024simple, luhman2021knowledge}. Notably, Decoupled DMD \cite{liu2025decoupled} rigorously isolates this into a Distribution Matching (DM) regularizer and a CFG Augmentation (CA) engine. However, existing implementations universally adopt a globally static CFG scale and randomly sample the CA supervisor timestep. This blind injection paradigm ignores the dynamic entropy-reduction during image generation, inevitably causing severe over-conditioning artifacts \cite{ho2022classifier, liu2025decoupled, sauer2024add, lin2024sdxl}. 


\begin{figure*}[!t]
    \centering
    \includegraphics[width=0.95\textwidth]{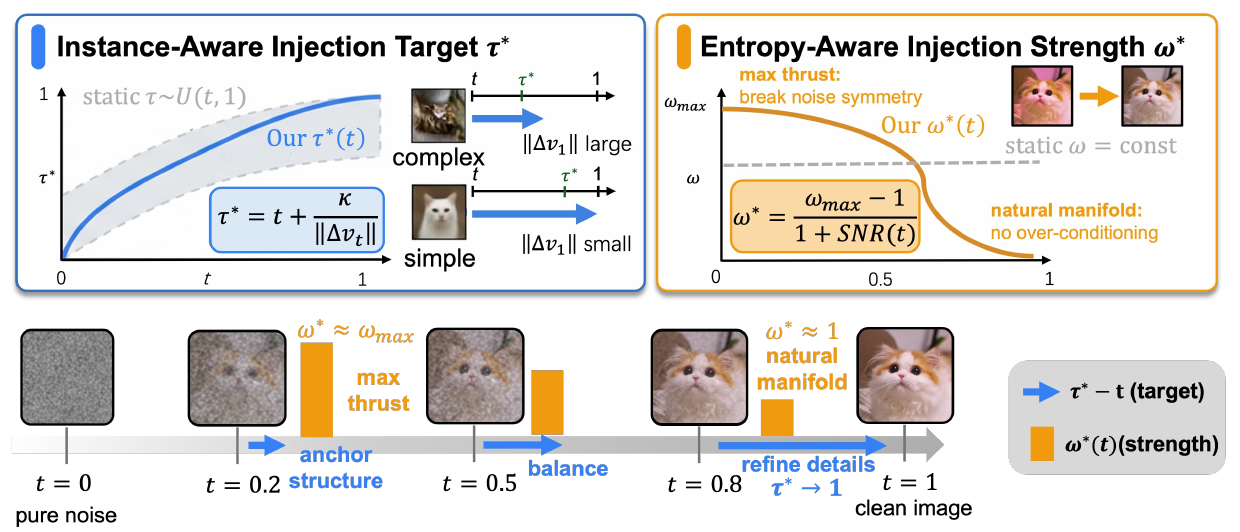}
    \caption{Overview of our dynamic information injection framework. While traditional distillation enforces a static guidance strength $\omega$ and blindly samples the supervisor timestep $\tau$, our IB-guided approach dynamically selects the optimal injection target $\tau^*$ and strength $\omega^*$.}
    \label{fig:method_pipeline}
    \vspace{-0.5cm}
\end{figure*}

\section{Method}
\label{sec:methodology}

In this section, we formally delineate our dynamic information injection framework for ultra-efficient few-step generation. We first establish the theoretical bedrock by revisiting Flow Matching and CFG distillation. To accelerate iterative denoising, few-step distillation has been widely used in an attempt to compress tedious multi-step trajectories (e.g., 50 steps) into very low-step trajectories (e.g., 2 steps) without compromising the fidelity of generation.

\subsection{Preliminaries}
\label{subsec:preliminaries}

\subsubsection{Flow Matching}
\label{subsubsec:flow_matching}

Flow Matching (FM)\cite{lipman2022flow} provides a unified framework based on continuous normalizing flows to learn the probability density path from a simple noise distribution $p_0$ to a complex data distribution $p_1$. Compared to traditional diffusion models, FM simplifies the training objective by directly regressing the vector field, allowing the sampling process to be performed by solving an Ordinary Differential Equation (ODE). Specifically, given data $x_1 \sim p_1$ and noise $x_0 \sim p_0$, we define a linear probability path $x_t = (1-t)x_0 + tx_1$. The conditional vector field corresponding to this path is defined as:
\begin{equation}
    u_t(x_t|x_1) = \frac{\mathrm{d}x_t}{\mathrm{d}t} = x_1 - x_0.
\end{equation}
The training objective for the parameterized neural network $v_\theta(x_t, t)$ is to minimize the expected $L_2$ error between its prediction and the target vector field:
\begin{equation}
    \mathcal{L}_{FM}(\theta) = \mathbb{E}_{t \sim \mathcal{U}(0,1), x_0, x_1} \left\| v_\theta(x_t, t) - (x_1 - x_0) \right\|^2.
\end{equation}
During inference, generation quality heavily relies on the number of ODE solver steps. Although FM offers smoother trajectories than diffusion models, generating high-quality images with large-scale models typically still requires 40-100 Number of Function Evaluations (NFEs), limiting its application in real-time interactive scenarios.

\subsubsection{Few-Step CFG Distillation and Decoupled Supervision}
\label{subsubsec:few_step_cfg_distillation}

CFG is a crucial technique for enhancing text-to-image generation alignment. It strengthens semantic guidance by linearly extrapolating between the conditional vector field $v_t^c(x_t)$ and the unconditional vector field $v_t^u(x_t)$:
\begin{equation}
    \tilde{v}_t^c(x_t, \omega) = v_t^u(x_t) + \omega \left( v_t^c(x_t) - v_t^u(x_t) \right),
    \label{eq:cfg}
\end{equation}
where $\omega > 1$ is the guidance scale. Although CFG significantly improves image quality, it doubles the already high computational cost by requiring two forward passes at each inference step.

Few-step distillation aims to compress the costly multi-step trajectory into extremely few steps. Traditional frameworks \cite{yin2024dmd2} typically attempt to make the student model implicitly fit the modified trajectory. However, recent insights from Decoupled DMD \cite{liu2025decoupled} resolves this issue by rigorously decoupled into Distribution Matching (DM) and CFG Augmentation (CA). The gradient direction driving the student model parameter $\theta$ updates can be approximated as:
\begin{equation}
    \Delta \theta \propto \mathbb{E}_{t, x_t, c} \left[ \Big( \underbrace{\Delta_{DM}(\tau_{DM})}_{\text{Shield (Regularizer)}} + \underbrace{\Delta_{CA}(\tau_{CA}, \omega)}_{\text{Spear (Engine)}} \Big) \frac{\partial x_t}{\partial \theta} \right],
    \label{eq:decoupled_dmd}
\end{equation}
where the DM term $\Delta_{DM}(\tau_{DM}) = v_{\tau_{DM}}^{real, c}(x_{\tau_{DM}}) - v_{\tau_{DM}}^{fake, c}(x_{\tau_{DM}})$ acts as a regularizing shield to suppress generative artifacts by matching student and teacher distributions at the reference timestep $\tau_{DM}$. More crucially, the CA functions as the core spear for achieving few-step compression:
\begin{equation}
    \Delta_{CA}(\tau_{CA}, \omega) = (\omega - 1) \left( v_{\tau_{CA}}^{real, c}(x_{\tau_{CA}}) - v_{\tau_{CA}}^{real, u}(x_{\tau_{CA}}) \right).
    \label{eq:ca_term}
\end{equation}
The CA term distill the deterministic CFG decision pattern of the teacher model at a specific moment into the student model, where the guidance strength $\omega$ governing the injection magnitude and the supervisor timestep $\tau_{CA}$ determining the injection target are two control knobs of CA.

\subsection{Instance-Aware Injection Target via Information Bottleneck}
\label{subsec:proposed_method}

However, existing frameworks remain subjugated by a blind injection paradigm that perpetually enforces a constant guidance strength $\omega$ while randomly sampling the supervisor timestep $\tau_{CA}$ from a broad distribution. As image generation is a dynamic entropy reduction process, where the demand for both \textbf{injection strength} $\omega$ and \textbf{injection target} $\tau_{CA}$ inherently varies across different denoising stages $t$. This static CA injection severely limits the performance upper bound of few-step generation.

\subsubsection{The Information Bottleneck Formulation}

In a single forward pass of CFG distillation, the student at time step $t$ absorbs guidance from the teacher by sampling the teacher at time step $\tau_{CA} > t$. A tradeoff is in this process: on the one hand, teachers should provide enough \textit{information supply} for students to get enough guidance. On the other hand, the distribution difference between students and teachers must not be too large, otherwise it will bring high \textit{learning costs}.

\textbf{Quantifying the Information Supply:} 
The ultimate goal of information injection is to aid the current state $X_t$ in approximating the clean data state $X_1$. Thus, the effective information of the injected features from $\tau_{CA}$ can be rigorously quantified by the conditional mutual information:
\begin{equation}
    \mathcal{I}_{supply}^{\tau_{CA}}(X_1, X_t) = I(X_{\tau_{CA}} ; X_1 | X_t).
\end{equation}

\textbf{Bounding the Learning Cost:} 
We describe the learning cost for students as the difference in distribution between the student and the teacher. In particular, this cost is quantified by the KL divergence between the two marginal distributions \cite{cover2006elements}:
\begin{equation}
    D_{KL}(p_t \| p_{\tau_{CA}}) = \int p_t(x) \log \frac{p_t(x)}{p_{\tau_{CA}}(x)} dx.
    \label{eq:kl_divergence}
\end{equation}
where $p_t(x)$ and $p_{\tau_{CA}}(x)$ denote the marginal probability density functions of the generative process at the current evolutionary state $t$ and the target injection stage $\tau_{CA}$, respectively. 

\textbf{The Information Bottleneck in Distillation:} Due to the monotonic diffusion process of flow matching, the predictive information regarding $X_1$ contained in $X_{\tau_{CA}}$ strictly increases with $\tau_{CA}$. If we merely sought to maximize the supply, we should constantly select $\tau_{CA}=1$ to acquire the most distinct high-frequency supervision. However, directly learning $X_1$ at the early stage brings extremely high learning cost, which can lead to distillation divergence \cite{liu2025decoupled}. To solve this problem, we introduce the \textit{Information Bottleneck (IB)} ~\cite{tishby1999information} and formalize the optimal target time step $\tau_{CA}^*$selection problem as a mutual information maximization problem with cost constraints:
\begin{align}
    \tau_{CA}^*(x_t, t) = \arg\max_{\tau \in [t, 1]} \quad & I(x_{\tau_{CA}} ; X_1 | X_t) \nonumber \\
    \text{s.t.} \quad & D_{KL}(p_t \| p_{\tau_{CA}}) \le \delta.
    \label{eq:ib_optimization}
\end{align}
Where hyperparameter $\delta$ acts as a restricted instantaneous channel capacity. This formulation compactly reveals the mathematical essence of dynamic injection: during the high-entropy early stages, the divergence constraint forces a lower upper bound on $\tau_{CA}$, ensuring the model absorbs safe, low-frequency structural features. Conversely, in the low-entropy later stages, the constraint relaxes, allowing the model to transition towards $\tau \to 1$ to fully leverage mutual information.

\subsubsection{Tractable Approximation via Fisher Information}

Based on \eqref{eq:ib_optimization}, we need to obtain the analytic form of $\tau_{CA}^*(x_t, t)$. However, directly resolving the continuous KL divergence constraint formulated in  within high-dimensional pixel space is profoundly intractable. To circumvent this bottleneck, we adopt the continuous vector field perspective inherent in FM and leverage Taylor expansion to transmute constraint into a zero-overhead closed-form solution. Specifically, we have the following lemma:

\begin{lemma}\label{lem:KL divergence constraint}
The KL divergence constraint in \eqref{eq:ib_optimization} can be approximate as:
\begin{equation}
    D_{KL}(p_t \| p_\tau) \approx \frac{1}{2} (\tau - t)^2 \| \Delta v_t(x_t) \|_2^2.
    \label{eq:kl_fisher_approximation}
\end{equation}
\end{lemma}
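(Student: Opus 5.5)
The plan is to expand the KL divergence to second order in the time gap $h := \tau - t$, treating $p_\tau$ as a small perturbation of $p_t$ along the probability path. The natural tool is the Fisher-information form of the KL divergence. For a smooth one-parameter family $\{p_s\}$, the expansion
\begin{equation*}
D_{KL}(p_t \| p_{t+h}) = \tfrac{1}{2} h^2 \, \mathcal{F}(t) + O(h^3), \qquad \mathcal{F}(t) = \mathbb{E}_{p_t}\big[(\partial_s \log p_s(x)|_{s=t})^2\big],
\end{equation*}
holds because the zeroth-order term vanishes and the first-order term is $\mathbb{E}_{p_t}[\partial_s \log p_s] = \partial_s \int p_s = 0$. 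I will write this expansion out first. This step is routine, given smoothness of $s \mapsto p_s$ and enough integrability to exchange differentiation and integration.

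The second step is to express the Fisher information $\mathcal{F}(t)$ through the FM vector field. The continuity equation of flow matching gives
\begin{equation*}
\partial_t p_t = -\nabla\cdot(p_t v_t),
\end{equation*}
so the score-in-time is $\partial_t \log p_t = -\nabla\cdot v_t - v_t\cdot\nabla\log p_t$. This is not literally $\|v\|^2$.

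To reach the stated form I will adopt the local, Lagrangian viewpoint that the paper uses. Instead of comparing the marginals globally, I will compare the transported state $x_\tau \approx x_t + h\, v_t(x_t)$ with a local Gaussian surrogate centered at $x_t$. Here $\Delta v_t(x_t)$ is the relevant velocity displacement at $x_t$, namely the CA difference field $v^{c}_t - v^{u}_t$ (or the mismatch between the student and teacher fields). Under an isotropic unit-covariance Gaussian local model with shared covariance, the KL divergence between two Gaussians whose means differ by $h\,\Delta v_t(x_t)$ is exactly
\begin{equation*}
\tfrac{1}{2}\, h^2 \|\Delta v_t(x_t)\|_2^2.
\end{equation*}
Equivalently, the Fisher metric of a unit Gaussian location family is the identity, and plugging the displacement $h\,\Delta v_t$ into the quadratic form of the first step yields the claimed approximation.

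The main obstacle is justifying this substitution, since it replaces a global Fisher information by a per-sample quantity at $x_t$. I would present it explicitly as a modeling approximation with three ingredients. First, the local density is approximated by a Gaussian with covariance normalized to the identity, which absorbs the noise scale into units. Second, the divergence term $\nabla\cdot v_t$ is neglected, which amounts to assuming locally volume-preserving transport, consistent with the near-straight OT paths of rectified flow. Third, the expectation is replaced by its evaluation at the current sample, which gives instance-awareness. Higher-order terms are $O(h^3)$ and are dropped. I will state clearly that the lemma holds in the "$\approx$" sense under these assumptions, rather than claiming exactness.
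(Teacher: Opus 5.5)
Your first step matches the paper's: both expand $D_{KL}(p_t\|p_{t+h})$ to second order and reduce it to $\tfrac12 h^2$ times the temporal Fisher information. After that the routes differ.

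\begin{itemize}
\item \emph{The paper's route.} It uses the score--velocity relation to write the CFG residual as $\Delta v_t\propto\nabla_x\mathrm{PMI}_t(x_t;c)$. It then asserts that, for the CA-driven transition, the temporal score variation is dominated by this conditional shift. This lets it replace the Fisher information by $\|\nabla_x\mathrm{PMI}_t\|_2^2\propto\|\Delta v_t\|_2^2$.
\item \emph{Your route.} You pass to a unit-covariance Gaussian location model. There the Fisher metric is the identity, and a mean shift $h\,\Delta v_t$ costs exactly $\tfrac12 h^2\|\Delta v_t\|_2^2$.
\end{itemize}

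Your route actually produces the constant $\tfrac12$, whereas the paper's chain ends in ``$\propto$''. It also states its assumptions openly. The paper's route, in exchange, gives $\|\Delta v_t\|$ an information-theoretic meaning.

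Normalizing the covariance to $I$ is a modeling choice, not a change of units. On the path $x_t=(1-t)x_0+tx_1$ the natural local covariance is $(1-t)^2 I$, which would give $h^2\|\Delta v_t\|_2^2/(2(1-t)^2)$. The paper hides a comparable $t$-dependent factor: for the exact fields on this path, $\Delta v_t=\tfrac{1-t}{t}\nabla_x\mathrm{PMI}_t$.

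The real gap is the substitution of $\Delta v_t$ for $v_t$. None of your three listed ingredients supplies it.

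\begin{itemize}
\item Under those ingredients, your own continuity-equation computation gives $\partial_t\log p_t\approx -v_t\cdot\nabla\log p_t = v_t\cdot(x-x_t)$ for $p_t\approx\mathcal{N}(x_t,I)$. Its second moment is $\|v_t\|_2^2$.
\item Equivalently, the Lagrangian displacement from $t$ to $\tau$ is $h\,v_t$, not $h\,\Delta v_t$.
\item So your stated assumptions yield $\tfrac12 h^2\|v_t(x_t)\|_2^2$. The CFG residual appears only because you declare it the relevant displacement.
\end{itemize}

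That declaration is the substantive content of the lemma: it is why the budget is governed by the residual norm rather than the full speed. The paper makes the same leap, but it motivates it. The transition being charged is the CA-driven one, which moves along $\Delta v_t$. And $\Delta v_t$ is the gradient of the pointwise mutual information with $c$, i.e.\ exactly the condition-specific information being injected.

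To close the gap, add this as an explicit fourth assumption with that motivation. Also drop the alternative reading ``mismatch between the student and teacher fields''. That is the DM residual, not the quantity $v^c_t-v^u_t$ that Theorem~\ref{the:tau} uses.
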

\begin{proof}
For an infinitesimal temporal increment $\Delta t = \tau - t$, the local KL divergence can be approximated via Taylor expansion utilizing the Fisher Information Metric (FIM). Under the CFG mechanism, the residual between the conditional and unconditional vector fields, denoted as $\Delta v_t(x_t) = v_t^c(x_t) - v_t^u(x_t)$, is mathematically rigorously proportional to the logarithmic gradient of Pointwise Mutual Information (PMI). Consequently, the divergence penalty precipitated by information injection can be strictly bounded by the squared $L_2$ norm of the local vector field. The full proof can be found in Appendix \ref{sec:appendix_lemma1}.
\end{proof}

Further, we consider the allowable range of $D_{KL}(p_t \| p_\tau)$, which, by Lemma \ref{lem:KL divergence constraint}, depends on the allowable upper bound of the injection span at the target time step $\tau$. Specifically, we have the following lemma:

\begin{lemma}\label{lem:injection span}
The injection span at the target time step $\tau$ satisfies:
\begin{equation}
    \tau - t \le \frac{\sqrt{2\delta}}{\| \Delta v_t(x_t) \|_2}.
\end{equation}
\end{lemma}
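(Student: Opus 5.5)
The plan is to combine the feasibility condition in \eqref{eq:ib_optimization} with the quadratic surrogate of Lemma~\ref{lem:KL divergence constraint}, then solve the resulting scalar inequality for the span $\tau - t$.

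\textbf{Step 1: substitute the surrogate into the constraint.} Any admissible target $\tau \in [t,1]$ must satisfy $D_{KL}(p_t \| p_\tau) \le \delta$. Replacing the divergence by its second-order approximation from \eqref{eq:kl_fisher_approximation} gives
\begin{equation*}
\frac{1}{2}(\tau - t)^2 \,\| \Delta v_t(x_t) \|_2^2 \le \delta .
\end{equation*}
This is the only place where Lemma~\ref{lem:KL divergence constraint} enters.

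\textbf{Step 2: solve the scalar inequality.} Multiplying by $2$ gives $(\tau - t)^2 \|\Delta v_t(x_t)\|_2^2 \le 2\delta$. Both factors on the left are nonnegative, so we may take square roots:
\begin{equation*}
|\tau - t| \,\|\Delta v_t(x_t)\|_2 \le \sqrt{2\delta}.
\end{equation*}
Two facts then finish the argument. First, the feasible set is $\tau \ge t$, so $|\tau - t| = \tau - t$. Second, assuming $\|\Delta v_t(x_t)\|_2 > 0$, dividing by it yields exactly the claimed bound $\tau - t \le \sqrt{2\delta}/\|\Delta v_t(x_t)\|_2$.

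\textbf{Step 3: the degenerate case.} If $\Delta v_t(x_t) = 0$, the conditional and unconditional fields coincide at $x_t$. The surrogate cost then vanishes, and the constraint imposes no restriction beyond $\tau \le 1$. We read the right-hand side as $+\infty$ in this case, so the inequality holds trivially.

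\textbf{Main obstacle: rigor of the approximation.} Lemma~\ref{lem:KL divergence constraint} is a local, second-order statement valid for small $\tau - t$. The bound derived here is therefore exact for the surrogate constraint, but only approximate for the true KL constraint. To make this honest, I would state the lemma as a consequence of the approximated constraint. Alternatively, one could assume the surrogate upper-bounds the true divergence on the relevant range; in that case the derived span is sufficient for feasibility, though not necessary.

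I would also note the consequence used downstream. Since the supplied mutual information increases with $\tau$, the optimizer saturates this bound, giving
\begin{equation*}
\tau^*_{CA} = \min\!\left(1,\; t + \frac{\sqrt{2\delta}}{\|\Delta v_t(x_t)\|_2}\right).
\end{equation*}
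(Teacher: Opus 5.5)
Your proposal is correct and matches the paper's proof. The paper likewise substitutes the quadratic surrogate of Lemma~\ref{lem:KL divergence constraint} into the budget $D_{KL}(p_t\|p_\tau)\le\delta$ and solves $\tfrac12(\tau-t)^2\|\Delta v_t(x_t)\|_2^2\le\delta$ using $\tau-t\ge 0$ and the nonnegativity of the norm. Your handling of the degenerate case $\Delta v_t(x_t)=0$ and your caveat that the bound is exact only for the surrogate constraint are reasonable additions the paper omits; it only handles the zero-norm case later, through $\epsilon$ in Theorem~\ref{the:tau}.
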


Finally, to maximize the mutual information supply, we adopt a greedy strategy by taking the equality. Combining Lemma 1 and Lemma 2, we have the following theorem:
\begin{theorem}\label{the:tau}
The closed-form optimal solution of $\tau_{CA}^*$ is:
\begin{equation}
    \tau_{CA}^*(x_t, t) = \min \left( 1, \quad t + \frac{\kappa}{\| v_t^c(x_t) - v_t^u(x_t) \|_2 + \epsilon} \right),
    \label{eq:final_tau_optimal}
\end{equation}
where $\kappa = \sqrt{2\delta}$ emerges as a hyperparameter encapsulating the globally divergence budget to explicitly govern the adaptive injection stride, while $\epsilon$ serves as a numerical stabilizing constant.
\end{theorem}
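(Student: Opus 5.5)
The plan is to derive the closed form by solving the constrained problem \eqref{eq:ib_optimization} directly, with the approximations of Lemma~\ref{lem:KL divergence constraint} and Lemma~\ref{lem:injection span} substituted in.

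\textbf{Step 1: the objective is monotone in $\tau$.} Along the linear path $x_\tau=(1-\tau)x_0+\tau x_1$, the family $(X_t, X_\tau, X_1)$ forms a Markov structure in which $X_\tau$ becomes more informative about $X_1$ as $\tau$ grows. I would show that for $t\le\tau\le\tau'$ one has $I(X_\tau;X_1\mid X_t)\le I(X_{\tau'};X_1\mid X_t)$. The route is the data-processing inequality: realize $X_\tau$, given $X_t$, as a (stochastic) degradation of $X_{\tau'}$. Equivalently, use the fact that the SNR $\tau^2/(1-\tau)^2$ of $X_\tau$ about $X_1$ is increasing in $\tau$, so that Gaussian channels of higher SNR dominate those of lower SNR. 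Consequently the constrained maximizer is the largest feasible $\tau$ in $[t,1]$.

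\textbf{Step 2: describe the feasible set.} By Lemma~\ref{lem:KL divergence constraint}, the constraint $D_{KL}(p_t\|p_\tau)\le\delta$ becomes $\tfrac12(\tau-t)^2\|\Delta v_t(x_t)\|_2^2\le\delta$. Since $\tau\ge t$, Lemma~\ref{lem:injection span} rewrites this as the interval $t\le\tau\le t+\sqrt{2\delta}/\|\Delta v_t(x_t)\|_2$. Intersecting with $[t,1]$ gives the feasible set
\[
[t,\ \min(1,\ t+\sqrt{2\delta}/\|\Delta v_t(x_t)\|_2)].
\]
The first argument of the minimum is active in the low-divergence, late-stage regime.

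\textbf{Step 3: combine and regularize.} By Step 1, the optimum sits at the right endpoint of this interval; this is the ``greedy equality'' remarked on before the theorem. Setting $\kappa=\sqrt{2\delta}$ and expanding $\Delta v_t=v_t^c-v_t^u$ yields \eqref{eq:final_tau_optimal}. The only remaining case is $\|\Delta v_t\|_2\to 0$. There the constraint is vacuous and the exact solution is $\tau^*=1$. Adding $\epsilon$ to the denominator reproduces this limit through the $\min$, so $\epsilon$ introduces only an $O(\epsilon)$ perturbation of the stride when $\|\Delta v_t\|_2\gg\epsilon$.

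\textbf{Main obstacle.} The step I expect to be hardest is Step 1. The paper merely asserts monotonicity, and the conditioning on $X_t$ complicates the data-processing argument, because $X_\tau$ is not simply a noisier copy of $X_{\tau'}$ once $X_t$ is fixed. I would first try an explicit coupling. Write $x_\tau$ as a convex combination of $x_t$ and $x_{\tau'}$ plus independent noise, which is possible for Gaussian $x_0$ by splitting the noise. This establishes the Markov chain $X_1\to(X_{\tau'},X_t)\to X_\tau$ and gives the inequality. All other steps are algebra on top of the two lemmas, which I take as given together with their approximation regime, namely small $\tau-t$.
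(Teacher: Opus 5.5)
Your proposal follows the paper's proof essentially step for step. The paper also argues that $I(X_\tau;X_1\mid X_t)$ is non-decreasing in $\tau$, places the optimum at the largest $\tau$ allowed by Lemma~2, clips at $\tau\le 1$, and adds $\epsilon$ to the denominator. The main place you go beyond the paper is Step~1, which the paper merely asserts. There your argument proves less than you intend.

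Under the paper's path, all $x_s=(1-s)x_0+s x_1$ share a single noise sample, so no noise splitting is needed. Indeed $x_\tau=\frac{\tau'-\tau}{\tau'-t}\,x_t+\frac{\tau-t}{\tau'-t}\,x_{\tau'}$ exactly, so the chain $X_1\to(X_t,X_{\tau'})\to X_\tau$ and the data-processing inequality are immediate. But the same linearity gives $x_1=x_t+\frac{1-t}{\tau-t}(x_\tau-x_t)$ for every $\tau>t$, so $(X_t,X_\tau)$ already determines $X_1$.

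Consequently $I(X_\tau;X_1\mid X_t)$ takes the same value for all $\tau\in(t,1]$: it equals $H(X_1\mid X_t)$ for discrete data and is $+\infty$ for continuous data. Three things follow:
\begin{itemize}
\item your inequality holds with equality;
\item conditioning on $X_t$ trivializes the data-processing argument rather than complicating it;
\item the claim that $X_\tau$ becomes more informative as $\tau$ grows is false for the conditional quantity.
\end{itemize}
So the objective does not single out the right endpoint; every feasible $\tau>t$ is a maximizer. The closed form comes only from the greedy tie-break the paper explicitly invokes (``taking the equality''), not from Step~1, and it is \emph{an} optimal solution rather than the unique one.

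Your SNR remark controls only the unconditional $I(X_\tau;X_1)$. It would yield a genuinely increasing conditional objective only under a different joint law: a Markov forward-noising coupling $X_1\to X_{\tau'}\to X_\tau\to X_t$. For that coupling, $I(X_\tau;X_1\mid X_t)=I(X_\tau;X_1)-I(X_t;X_1)$.
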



The proof can be found in Appendix \ref{sec:appendix_lemma2} and \ref{sec:appendix_theorem1}. Grounded in this formulation, we achieve an Information Bottleneck-guided dynamic optimization for CFG injection target $\tau$ across various generative stages $t$ where enforcing conservative macroscopic structural anchoring during the high-entropy initial phases, while facilitating meticulous micro-detail refinement in the solidified latter stages. By supplying adaptive and stabilized supervision for few-step CFG distillation, our framework fundamentally shatters the performance ceiling and elevates the generative fidelity of synthesized images.

\subsection{Entropy-Aware Dynamic Injection Strength}
\label{subsec:dynamic_omega}

While the instance-aware target $\tau_{CA}^*$ ensures the \emph{injection target} resides within a safe semantic divergence bound, it does not dictate the \emph{injection strength}. Existing distillation work generally employs a static CFG scale $\omega$ across all time steps, and the consistent guidance strength destroys the local characteristics of the aggregation, leading to CFG overconditioning effects, including color supersaturation and abnormal texture sharpening. This highlights that the system's demand for conditional information $C$ varies dynamically.

We consider the IB to quantify the dynamic demand for conditional information $C$. Analogous to the optimization of the injection target, we establish a rigorous theoretical equilibrium between the conditioning cost $I(X_t^{\omega} ; C)$ and the resultant generative fidelity $I(X_t^{\omega} ; X_1 | C)$:
\begin{equation}
    \min_{\omega(t)} \quad \underbrace{I(X_t^{\omega} ; C)}_{\text{Conditioning Cost}} - \beta \underbrace{I(X_t^{\omega} ; X_1 | C)}_{\text{Generative Fidelity}},
    \label{eq:ib_condition_optimization}
\end{equation}
where $X_t^{\omega}$ denotes the updated internal state driven by the guidance strength $\omega$, and $\beta$ is a hyperparameter balancing the two terms.

Based on \eqref{eq:ib_optimization}, we obtain  the analytic form of $\omega^*(t)$ as follows:

\begin{theorem}\label{the:omega}
The closed-form optimal solution of $\omega^*(t)$ is:
\begin{equation}
    \omega^*(t) = 1 + (\omega_{max} - 1) \cdot \frac{1}{1 + \gamma \cdot \text{SNR}(t)}, \quad \text{where} \quad \text{SNR}(t) = \frac{t^2}{(1 - t)^2},
    \label{eq:final_dynamic_omega}
\end{equation}
where $\omega_{max}$ designates the maximum guidance scale required to shatter the structural symmetry of pure noise in the initial high-entropy stage, and $\gamma$ is a temperature hyperparameter meticulously calibrating the sensitivity to the SNR transition.
\end{theorem}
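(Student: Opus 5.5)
We derive the schedule by making both terms of \eqref{eq:ib_condition_optimization} explicit functions of $\omega$ under a Gaussian surrogate of the linear path $x_t=(1-t)x_0+tx_1$, and then solving the first-order condition in $\omega$. The starting point is that guidance acts only through the CFG residual. By \eqref{eq:cfg}, the guided update moves the state by $(\omega-1)\,\Delta v_t(x_t)$ beyond the unconditional field. Following the argument used for Lemma~\ref{lem:KL divergence constraint}, $\Delta v_t$ is proportional to $\nabla \log p(c\mid x_t)$, the gradient of the pointwise mutual information. Hence the extra conditioning information injected into $X_t^{\omega}$ is, to second order, a quadratic in the guidance excess:
\begin{equation*}
I(X_t^{\omega};C)\approx I(X_t;C)+\tfrac12(\omega-1)^2 A(t),
\end{equation*}
where $A(t)$ is a Fisher-type quantity, namely the expected $\|\Delta v_t\|^2$ weighted by the noise variance $(1-t)^2$. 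We will treat $A(t)$ as approximately a constant $A$, since the residual magnitude itself is absorbed later into $\gamma$.

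The second step models the fidelity term. With a Gaussian surrogate, $X_1\sim\mathcal N(\mu_c,\sigma^2 I)$ given $C$, and $X_t=tX_1+(1-t)X_0$, we get $I(X_t;X_1\mid C)=\tfrac d2\log\bigl(1+\sigma^2\,\mathrm{SNR}(t)\bigr)$ with $\mathrm{SNR}(t)=t^2/(1-t)^2$; this is the origin of the SNR in the statement. Guidance with strength $\omega$ shifts the state along $\Delta v_t$, away from the unconditional data manifold. We will model the resulting loss of fidelity as a penalty that is linear in $(\omega-1)$ and scaled by how much data signal is present, i.e. a marginal fidelity gain of the form $\beta(\omega_{\max}-\omega)\cdot(\text{signal-dependent weight})$. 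The cleanest choice that closes the computation is to write the Lagrangian in the normalised guidance variable $g=(\omega-1)/(\omega_{\max}-1)\in[0,1]$ as
\begin{equation*}
\mathcal L(g)=\tfrac12 g^2\bigl(1+\gamma\,\mathrm{SNR}(t)\bigr)-g,
\end{equation*}
where $\gamma$ collects $\sigma^2$, $\beta$ and $A$. This form encodes that the conditioning cost of a unit of guidance grows with the data signal already present: at low SNR the condition is needed to break symmetry, while at high SNR it only fights the converged manifold.

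The final step is the stationarity condition $\partial_g\mathcal L=0$. It gives $g^*=1/(1+\gamma\,\mathrm{SNR}(t))$, so
\begin{equation*}
\omega^*=1+(\omega_{\max}-1)/(1+\gamma\,\mathrm{SNR}(t)),
\end{equation*}
which is the stated formula. To check that this is a minimiser we need $1+\gamma\,\mathrm{SNR}>0$, which holds for $\gamma>0$. The endpoints behave correctly: $\omega^*\to\omega_{\max}$ as $t\to0$ and $\omega^*\to1$ as $t\to1$.

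The main obstacle is the second step. Justifying that the fidelity and cost trade-off takes exactly the quadratic-in-$g$ form with curvature $1+\gamma\,\mathrm{SNR}(t)$ is a modelling choice, not a derivation. I would try to obtain it by linearising $\log(1+\sigma^2\mathrm{SNR})$ around the unguided state and declaring the approximation explicitly. I would then present the theorem as exact within the stated Gaussian and second-order surrogate, rather than as an exact result for the true IB objective.
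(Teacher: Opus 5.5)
Your route differs from the paper's, and it has a genuine gap at the decisive step: the Lagrangian $\mathcal L(g)=\tfrac12 g^2\bigl(1+\gamma\,\mathrm{SNR}(t)\bigr)-g$. It is chosen because its minimiser is the target formula, and your first two steps do not lead to it; they point the other way.

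\begin{itemize}
\item Step~1 makes the curvature of the conditioning cost a constant $A$. Yet $\mathcal L$ places all of the SNR dependence in the curvature.
\item Step~2's Gaussian formula $\tfrac d2\log\bigl(1+\sigma^2\mathrm{SNR}(t)\bigr)$ does not depend on $\omega$ at all.
\item Your verbal model cannot supply the reward term $-g$ with the right $t$-dependence. Read as a fidelity \emph{penalty} linear in $\omega-1$, it drives $\omega^*$ below $1$. Read as the gain $\beta(\omega_{\max}-\omega)\,w(t)$ you actually wrote, the first-order condition against the cost $\tfrac12 A(\omega-1)^2$ gives $g^*=\beta w/(A+\beta w)$. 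This equals $1/(1+\gamma\,\mathrm{SNR}(t))$ only if $w\propto 1/\mathrm{SNR}(t)$, i.e.\ a weight that \emph{decreases} with the signal. That is the opposite of ``scaled by how much data signal is present''.
\end{itemize}

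Where the SNR enters is the entire content of the theorem, so this cannot be left as a modelling choice.

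The missing idea, which the paper uses, is that the SNR enters through the \emph{marginal fidelity benefit} of guidance, and that this benefit shrinks as the signal grows. The paper argues as follows:
\begin{itemize}
\item It bounds the $\omega$-dependent part of $I(X_t^{\omega};X_1\mid C)$ by the residual uncertainty $H(X_1\mid X_t)$.
\item It appeals to I-MMSE to say this scales like $1/(1+\mathrm{SNR}(t))$.
\item It then \emph{posits} $\omega-1\propto$ residual uncertainty, inserting $\gamma$ by hand and anchoring $\omega(0)=\omega_{\max}$.
\end{itemize}
So the paper's argument is also heuristic at the decisive step.

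Your framework can turn this into an honest first-order condition, along the linearisation you proposed. Model guidance as raising the effective SNR by $\lambda(\omega-1)$, and expand your step-2 formula around $\omega=1$. Because $\partial_s\,\tfrac d2\log(1+\sigma^2 s)=\tfrac d2\,\sigma^2/(1+\sigma^2 s)$ is precisely the I-MMSE relation, the objective becomes
\[
\tfrac12 A(\omega-1)^2-\frac{\beta d\lambda\sigma^2}{2\bigl(1+\sigma^2\mathrm{SNR}(t)\bigr)}\,(\omega-1)+\mathrm{const}.
\]
This is minimised at $\omega^*-1=\frac{\beta d\lambda\sigma^2}{2A}\cdot\frac{1}{1+\sigma^2\mathrm{SNR}(t)}$. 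Evaluating at $t=0$ identifies $\omega_{\max}-1=\beta d\lambda\sigma^2/(2A)$. You then recover the stated schedule with $\gamma=\sigma^2$, while keeping your constant-curvature cost from step~1.

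Compared with the paper, this route buys three things:
\begin{itemize}
\item It derives the proportionality rather than postulating it.
\item It gives $\gamma$ a meaning: the conditional data variance.
\item It uses I-MMSE for what it actually says, namely that the SNR-derivative of mutual information is half the MMSE. The paper instead uses it as a claim that the entropy $H(X_1\mid X_t)$ itself scales like $1/(1+\mathrm{SNR})$.
\end{itemize}
One assumption remains: how $\omega$ enters $X_t^{\omega}$. The paper never specifies this either, so state it explicitly.
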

\begin{proof}
At any timestep $t$, the effective information gain achievable by increasing $\omega$ is bounded by the system's current residual uncertainty regarding the real data $X_1$, formally expressed as the conditional entropy $H(X_1 | X_t)$. Based on the fundamental I-MMSE relation \cite{guo2005mutual}, this residual uncertainty is analytically proven to be inversely proportional to the system's instantaneous Signal-to-Noise Ratio (SNR). Consequently, the degree of unknownness in the generative process can be explicitly quantified by the precise evolution of the SNR. Furthermore, to maximize generative fidelity while preventing manifold distortion caused by over-fitting the condition $C$, the supplementary driving force injected by the CFG engine, defined as $\omega(t) - 1$, must be proportional to the current residual uncertainty. By coupling this proportionality with the I-MMSE theorem, we substitute residual uncertainty with the instantaneous SNR. The full proof can be found in Appendix \ref{sec:appendix_theorem2}.
\end{proof}

Theorem \ref{the:omega} mathematically guarantees ideal boundary conditions: as $t \to 1$ and residual uncertainty dissipates, the guidance thrust smoothly and inevitably reverts to the unconditional natural manifold $\omega^*(1) \to 1$, thereby eradicating CFG-induced artifacts from first principles.

\subsection{Dynamic CFG Augmentation Objective}
\label{subsec:overall_loss}

Finally, by substituting our conclusion in Theorem \ref{the:tau} and Theorem \ref{the:omega} into \eqref{eq:ca_term}, we formulate the ultimate distillation objective. The gradient update for the student parameter $\theta$, driven explicitly by our dynamic CA formulation, is rigorously defined as:
\begin{equation}
\Delta \theta_{CA} \propto \mathbb{E}_{t, x_t, c}\!\left[\, \underbrace{(\omega^*(t) - 1)}_{\text{Dynamic Strength}} \underbrace{\left( v_{\tau_{CA}^*}^{real,c}(x_{\tau_{CA}^*}) - v_{\tau_{CA}^*}^{real,u}(x_{\tau_{CA}^*}) \right)}_{\text{Dynamic Target}} \nabla_\theta v_\theta(x_t, c) \,\right].
\end{equation}
This theoretically grounded, parameter-free objective elegantly unifies macroscopic structural anchoring and microscopic detail refinement under a strict entropy-reduction trajectory, establishing a foundation for extreme few-step generative synthesis.

\section{Experiments}
\label{sec:experiments}

\subsection{Experimental setup}
\label{subsec:exp_setup}

We apply the IBFlow distillation framework to three text-to-image teachers spanning different scales and paradigms: FLUX.1-dev~\cite{labs2025flux}, OpenUni-L-512~\cite{wu2025openuni}, and Qwen-Image-20B~\cite{qwenimage}, all of which generate images at $1024{\times}1024$ with classifier-free guidance under their default $50{\times}2$-NFE schedules. Following ArcFlow~\cite{yang2025arcflow}, all students are trained on a unified corpus of 2.3M text prompts. We optimize the model under BF16 mixed precision on 32 A100 GPUs. The CA divergence budget is $\kappa{=}1.5$, the SNR temperature is $\gamma{=}1.0$, $\omega_{\max}=4.0$ is inherited from each teacher's default CFG scale. For evaluation, we follow standard text-to-image protocols and report results on three benchmarks: GenEval~\cite{ghosh2023geneval}, DPG-Bench~\cite{hu2024ella}, , and OneIG-Bench~\cite{chang2025oneig} for fine-grained alignment along five axes. We deploy ArcFlow with their official released codebases for fair comparison.




\subsection{Main Results}
\label{subsec:comp_sota}

\begin{table}[t]
  \centering
  \renewcommand\arraystretch{1.3}
  \small
  \caption{Quantitative comparisons on Geneval, DPG-Bench and OneIG-Bench. The NFE of Qwen-Image-20B is recorded as $50 \times 2$ since it uses CFG.}
  \label{tab:main_comparison}
  \resizebox{\textwidth}{!}{
  \begin{tabular}{l c c c c c c c c}
    \toprule
    \multirow{2}{*}{\textbf{Model}} & \multirow{2}{*}{\textbf{NFE}} & \multirow{2}{*}{\textbf{Geneval}} & \multirow{2}{*}{\textbf{DPG-Bench}} & \multicolumn{5}{c}{\textbf{OneIG-Bench}} \\
    \cmidrule(lr){5-9}
    & & & & Alignment & Text & Diversity & Style & Reasoning \\
    \midrule
    FLUX.1-dev & 50 & 0.66 & 84.16 & 0.790 & 0.556 & 0.238 & 0.307 & 0.257 \\
    \midrule
    SenseFlow (FLUX) & 2 & 0.60 & 79.86 & 0.743 & 0.230 & 0.139 & 0.341 & 0.212 \\
    pi-Flow (GM-FLUX) & 2 & 0.58 & 82.36 & 0.764 & 0.141 & 0.216 & 0.332 & 0.212 \\
    ArcFlow (FLUX)  & 2 & 0.65 & 84.29 & 0.798 & 0.368 & 0.210 & 0.350 & 0.224 \\
    \rowcolor{gray!10} \textbf{IBFlow-FLUX (Ours)} & 2 & \textbf{0.68} & \textbf{85.13} & \textbf{0.802} & \textbf{0.465} & \textbf{0.236} & \textbf{0.312} & \textbf{0.247} \\
    \midrule
    \midrule
    OpenUni-L-512 & $20\times2$ & 0.85 & 81.54 & 0.771 & 0.275 & 0.132 & 0.396 & 0.227 \\
    \midrule
    OpenUni-RCGM-512 & 2 & 0.85 & 80.15 & 0.742 & 0.005 & 0.129 & 0.365 & 0.196 \\
    OpenUni-TwinFlow-512 & 2 & 0.85 & 79.82 & 0.735 & 0.006 & 0.126 & 0.372 & 0.219 \\
    \rowcolor{gray!10} \textbf{OpenUni-IBFlow-512 (Ours)} & 2 & \textbf{0.86} & \textbf{81.15} & \textbf{0.756} & \textbf{0.089} & \textbf{0.231} & \textbf{0.385} & \textbf{0.231} \\
    \midrule
    \midrule
    Qwen-Image-20B & $50\times2$ & 0.87 & 88.32 & 0.880 & 0.888 & 0.194 & 0.427 & 0.306 \\
    \midrule
    Qwen-Image-Lightning & 2 & 0.85 & 86.47 & 0.875 & 0.879 & 0.098 & 0.415 & 0.292 \\
    TwinFlow (Qwen) & 2 & 0.82 & 87.01 & 0.862 & 0.825 & 0.130 & 0.364 & 0.267 \\
    pi-Flow (GM-Qwen)~\cite{chen2025piflow} & 2 & 0.83 & 86.45 & 0.837 & 0.634 & 0.176 & 0.382 & 0.259 \\
    ArcFlow (Qwen) & 2 & 0.84 & 87.94 & 0.868 & 0.789 & \textbf{0.182} & 0.418 & 0.283 \\
    \rowcolor{gray!10} \textbf{IBFlow-Qwen (Ours)} & 2 & \textbf{0.86} & \textbf{88.67} & \textbf{0.877} & \textbf{0.863} & 0.180 & \textbf{0.425} & \textbf{0.301} \\
    \bottomrule
  \end{tabular}
  }
\end{table}

\begin{table}[t]
  \centering
  \renewcommand\arraystretch{1.25}
  \small
  \caption{Fine-grained breakdown of DPG-Bench on Qwen-Image-20B at 2 NFE. We report DPG total, the four L1 axes (\emph{global}, \emph{entity}, \emph{relation}, \emph{attribute}), and the most informative L2 axes covering structure, relation, color and detail.}
  \label{tab:dpg_finegrained}
  \resizebox{\textwidth}{!}{
  \begin{tabular}{l c c c c c c c c c c}
    \toprule
    \multirow{2}{*}{\textbf{Method}} & \multirow{2}{*}{\textbf{NFE}} & \multirow{2}{*}{\textbf{DPG total}} & \multicolumn{4}{c}{\textbf{L1 axes}} & \multicolumn{4}{c}{\textbf{L2 axes}} \\
    \cmidrule(lr){4-7}\cmidrule(lr){8-11}
    & & & Global & Entity & Relation & Attribute & Whole & Spatial & Color & Texture \\
    \midrule
    ArcFlow~\cite{yang2025arcflow} & 2 & 87.94 & 83.89 & 93.09 & 94.58 & 90.02 & 94.86 & 94.93 & 94.28 & 90.08 \\
    \rowcolor{gray!10}\textbf{IBFlow-Qwen (Ours)} & 2 & \textbf{88.68} & \textbf{85.11} & \textbf{93.59} & \textbf{95.20} & \textbf{90.76} & \textbf{95.56} & \textbf{95.59} & \textbf{94.93} & \textbf{91.17} \\
    \midrule
    \rowcolor{green!8}\emph{Improvement ($\Delta$)} & -- & \textbf{+0.74} & \textbf{+1.22} & \textbf{+0.50} & \textbf{+0.62} & \textbf{+0.74} & \textbf{+0.70} & \textbf{+0.66} & \textbf{+0.65} & \textbf{+1.09} \\
    \bottomrule
  \end{tabular}
  }
  \vspace{-0.3cm}
\end{table}

\paragraph{Quantitative comparison.}
Table~\ref{tab:main_comparison} compares IBFlow with representative 2-step distillation methods on each backbone, including SenseFlow~\cite{lee2025senseflow}, pi-Flow~\cite{chen2025piflow}, ArcFlow~\cite{yang2025arcflow}, RCGM~\cite{sun2026anystep}, TwinFlow~\cite{wu2025twinflow}, and Qwen-Image-Lightning~\cite{qwenimagelightning2025}. IBFlow attains the best overall scores across all three backbones at only 2 NFE. For challenging Qwen-Image-20B, IBFlow attains \textbf{0.86} GenEval and \textbf{88.67} DPG-Bench at only 2 NFE, outperforms state-of-the-art ArcFlow on both metrics ($0.84$ / $87.94$), demonstrating that the IB-driven dynamic schedules tighten the injection during high-entropy stages and relax it once the trajectory solidifies, thereby alleviating in one shot the three failure modes that plague every static-CA baseline at 2 NFE. The improvements are more pronounced compared with TwinFlow which have $0.82$ / $87.01$ on GenEval / DPG-Bench, indicating that its self-adversarial recipe still struggles to preserve fine-grained semantic alignment under such an aggressive step budget.

Table~\ref{tab:dpg_finegrained} further dissects the DPG-Bench gain over ArcFlow~\cite{yang2025arcflow} along the official L1/L2 axes. The two largest improvements pinpoint the role of each dynamic schedule: \emph{Global} rises by \textbf{+1.22}, showing that the IB-bounded supervisor $\tau_{CA}^*$ keeps the student anchored to safe low-frequency layouts during the high-entropy early stage; \emph{Texture} rises by \textbf{+1.09}, mirroring the boundary behavior $\omega^*(t)\!\to\!1$ that retracts late-stage guidance and removes over-sharpening. The remaining axes (Entity, Relation, Color) are also improved, confirming that the two complementary IB-driven schedules jointly lift fidelity across the entire entropy-reduction trajectory rather than along a single dimension.

\begin{figure*}[!t]
    \centering
    \begin{minipage}{0.64\textwidth}
        \centering
        \makebox[0.25\textwidth][c]{Lightning}%
        \makebox[0.25\textwidth][c]{ArcFlow}%
        \makebox[0.25\textwidth][c]{TwinFlow}%
        \makebox[0.25\textwidth][c]{\textbf{Ours}}
    \end{minipage}%
    \begin{minipage}{0.02\textwidth}
        \centering
    \end{minipage}%
    \begin{minipage}{0.32\textwidth}
        \centering
        \makebox[0.5\textwidth][c]{w/o Dyn $\tau_{CA}^*$}%
        \makebox[0.5\textwidth][c]{w/o Dyn $\omega^*(t)$}
    \end{minipage}\par\vspace{1mm}

    
    \begin{minipage}{0.64\textwidth}
        \includegraphics[width=\textwidth]{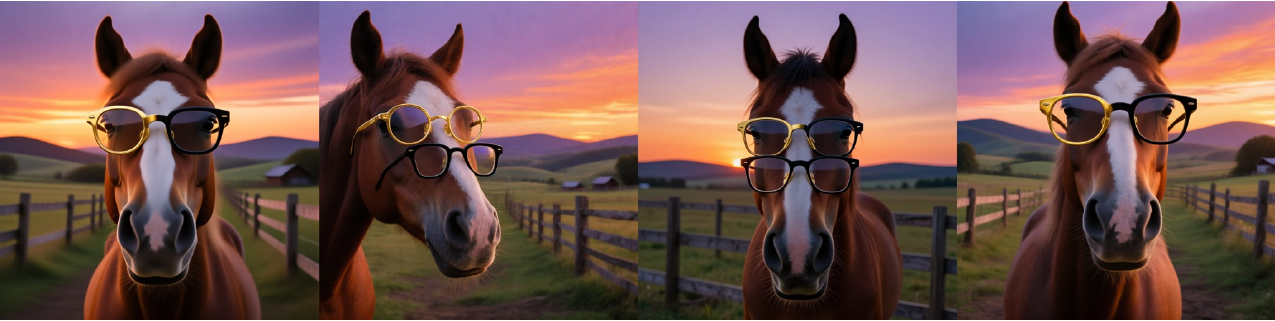}
    \end{minipage}%
    \begin{minipage}{0.02\textwidth}
        \centering
        $\vcenter{\hbox{\tikz[overlay] \draw [dashed, gray, thick] (0, 1.25) -- (0, -1.25);}}$
    \end{minipage}%
    \begin{minipage}{0.32\textwidth}
        \includegraphics[width=0.49\textwidth]{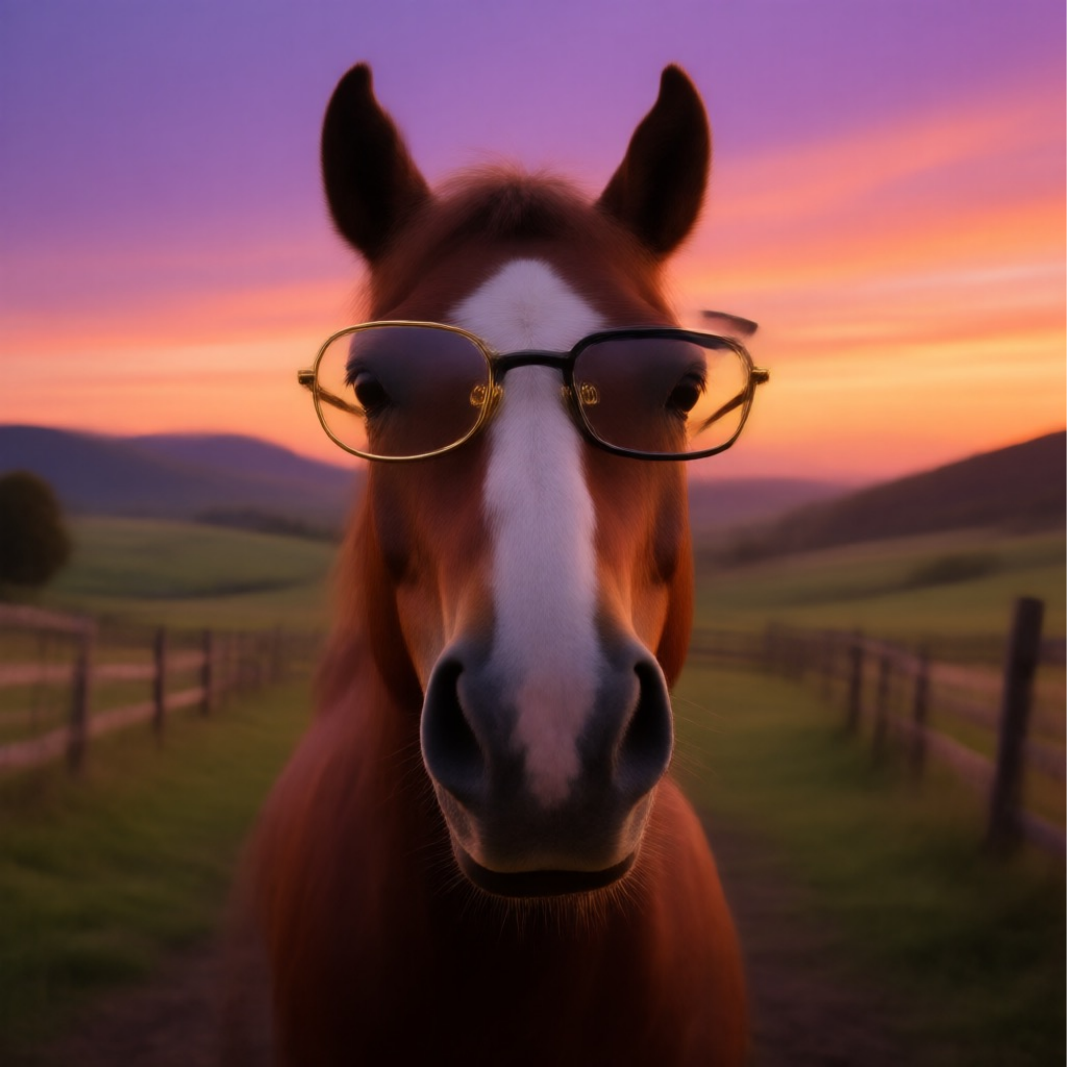}\hfill
        \includegraphics[width=0.49\textwidth]{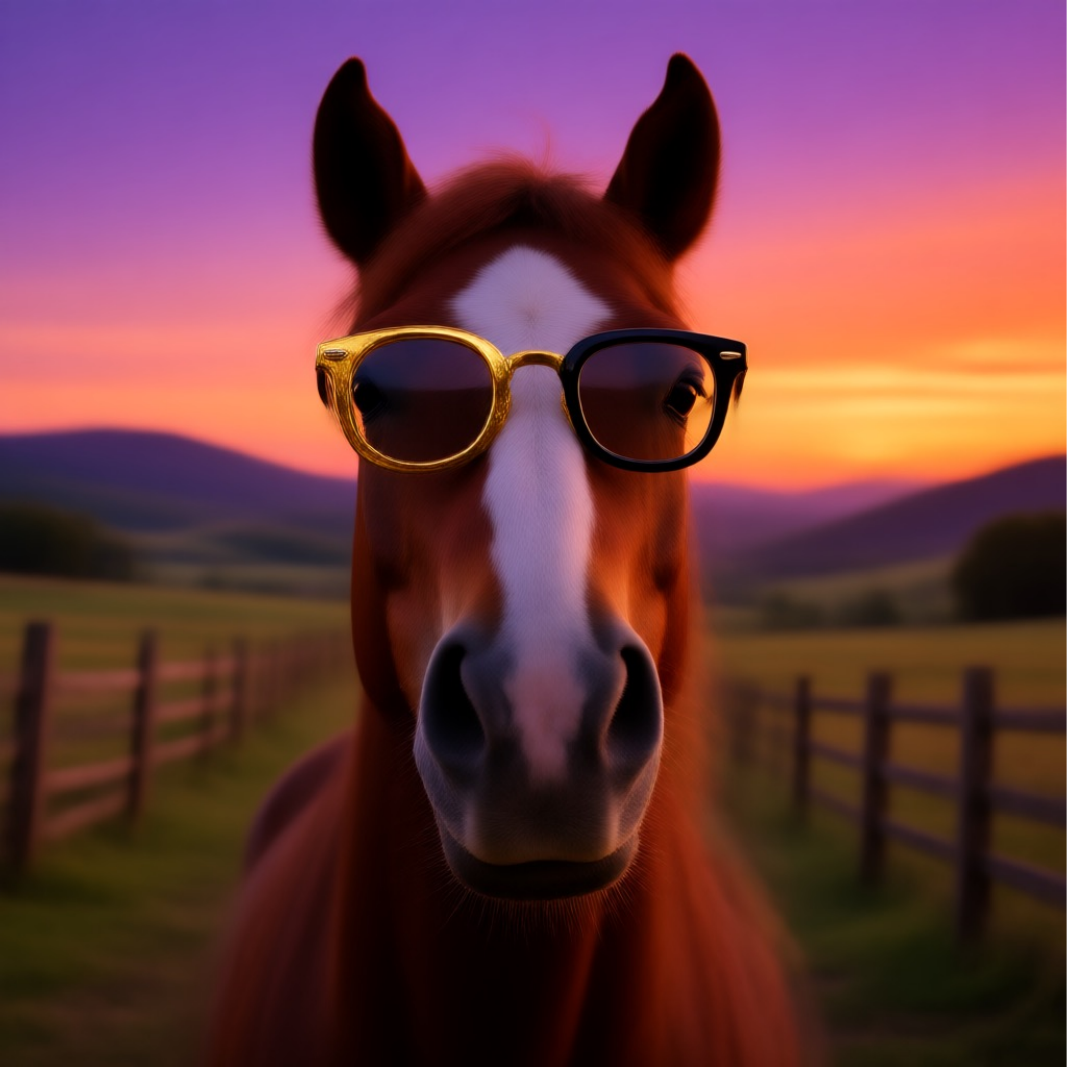}
    \end{minipage}\par\vspace{0.5mm}
    
    \begin{minipage}{0.64\textwidth}
        \includegraphics[width=\textwidth]{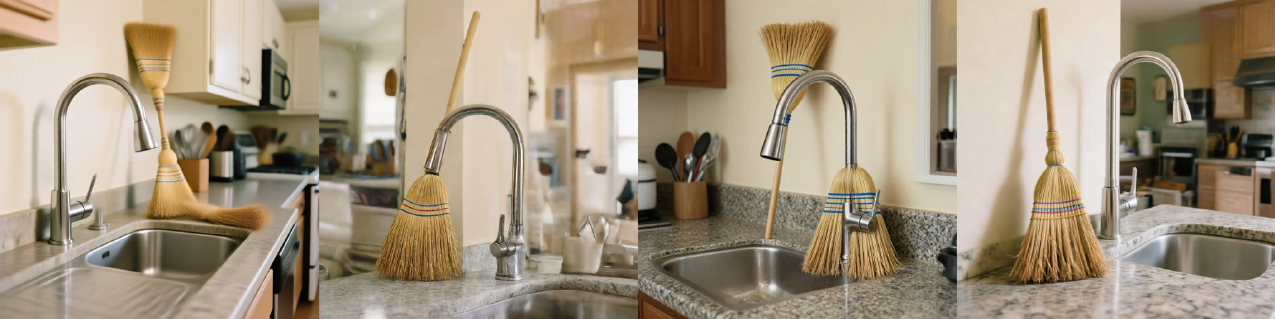}
    \end{minipage}%
    \begin{minipage}{0.02\textwidth}
        \centering
        $\vcenter{\hbox{\tikz[overlay] \draw [dashed, gray, thick] (0, 1.25) -- (0, -1.25);}}$
    \end{minipage}%
    \begin{minipage}{0.32\textwidth}
        \includegraphics[width=0.49\textwidth]{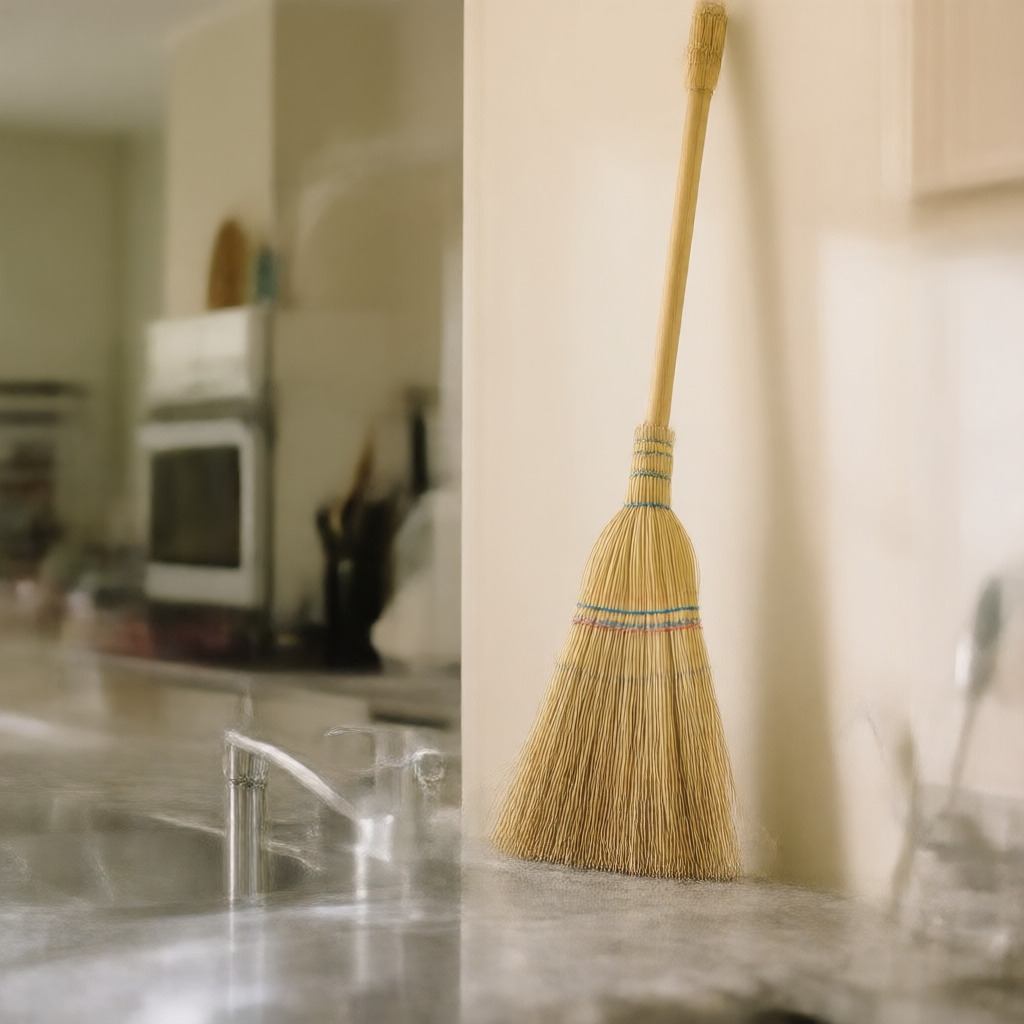}\hfill
        \includegraphics[width=0.49\textwidth]{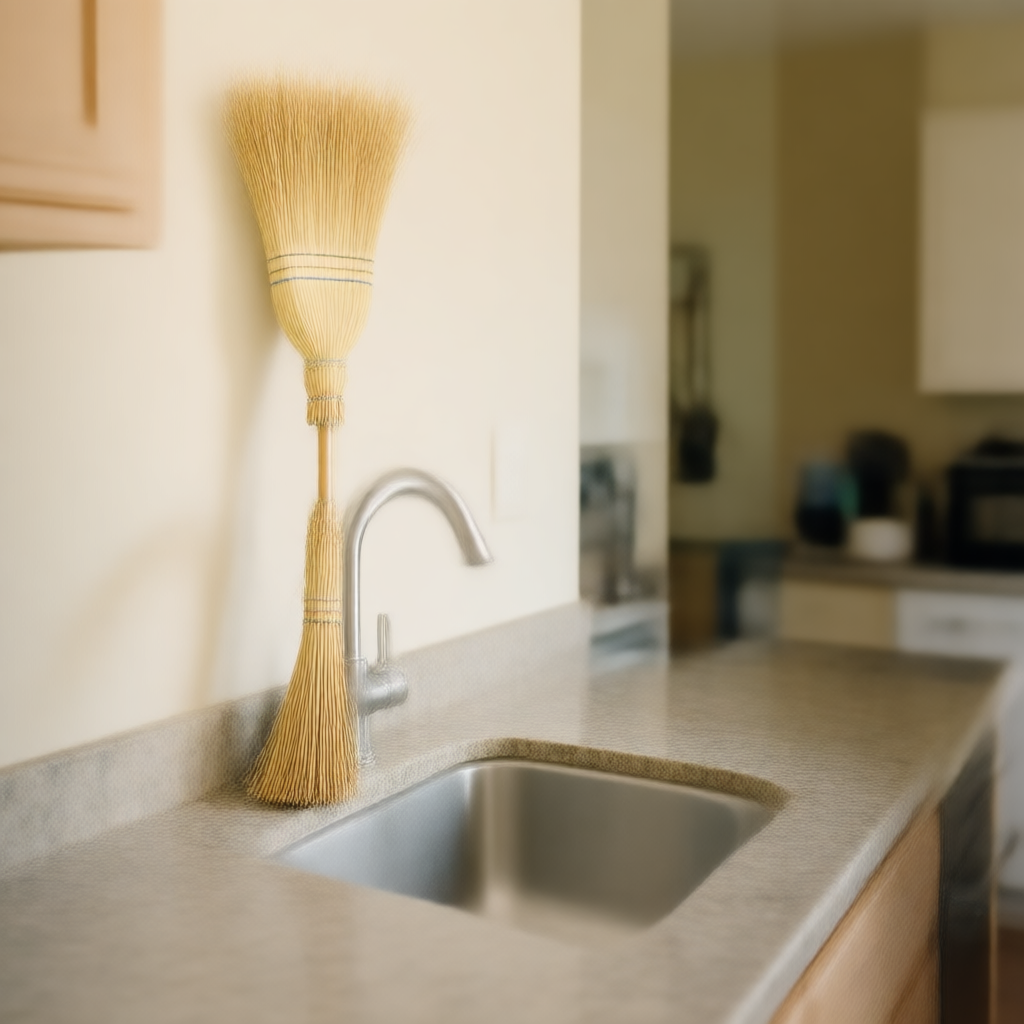}
    \end{minipage}\par\vspace{0.5mm}

    \begin{minipage}{0.64\textwidth}
        \includegraphics[width=\textwidth]{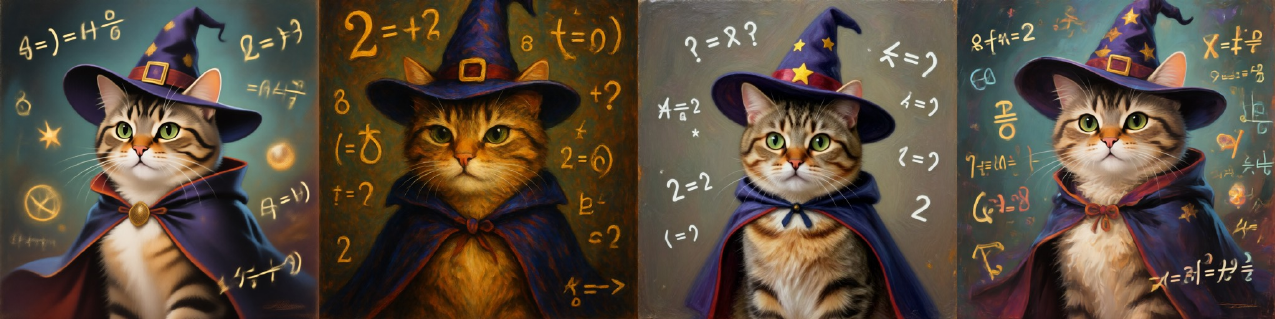}
    \end{minipage}%
    \begin{minipage}{0.02\textwidth}
        \centering
        $\vcenter{\hbox{\tikz[overlay] \draw [dashed, gray, thick] (0, 1.25) -- (0, -1.25);}}$
    \end{minipage}%
    \begin{minipage}{0.32\textwidth}
        \includegraphics[width=0.49\textwidth]{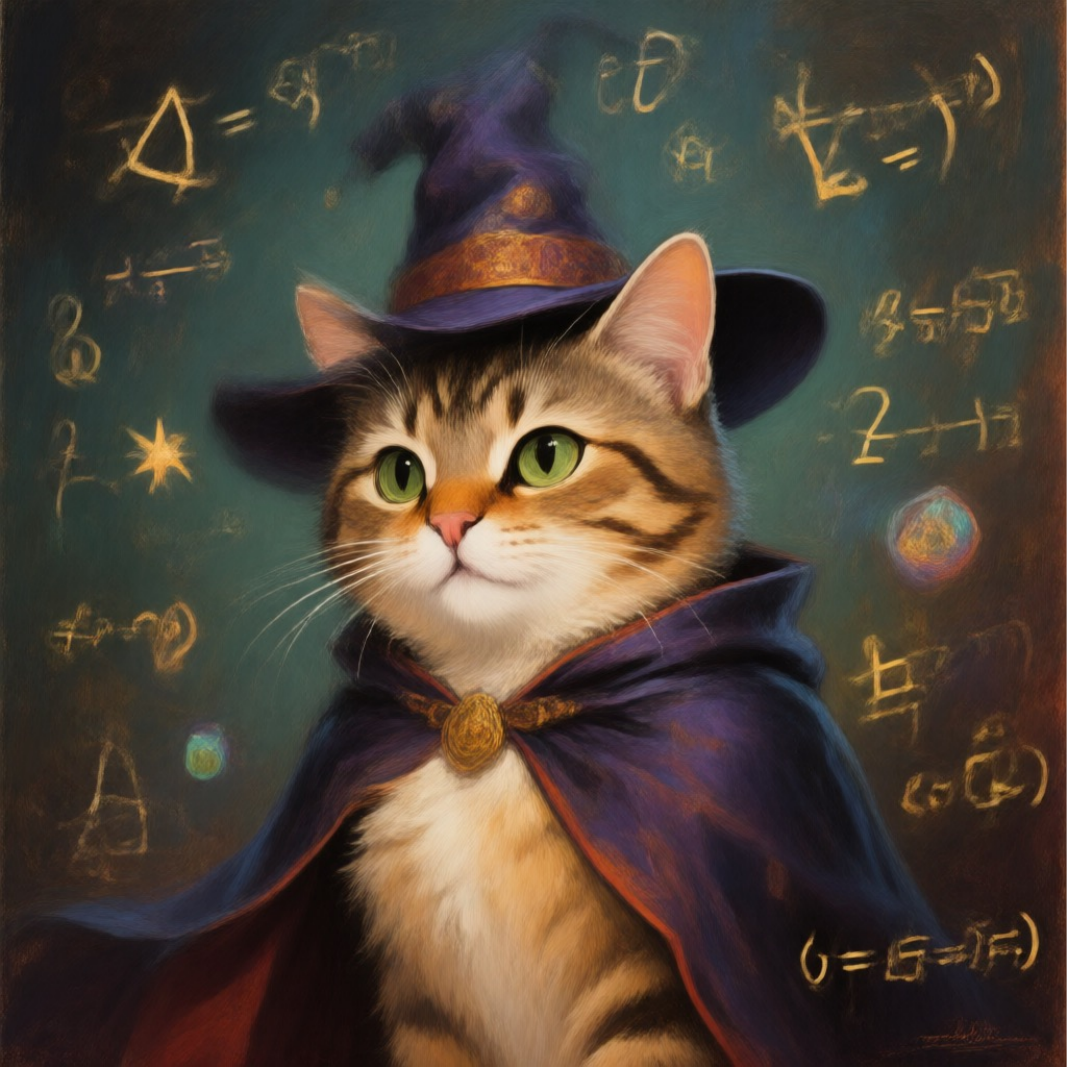}\hfill
        \includegraphics[width=0.49\textwidth]{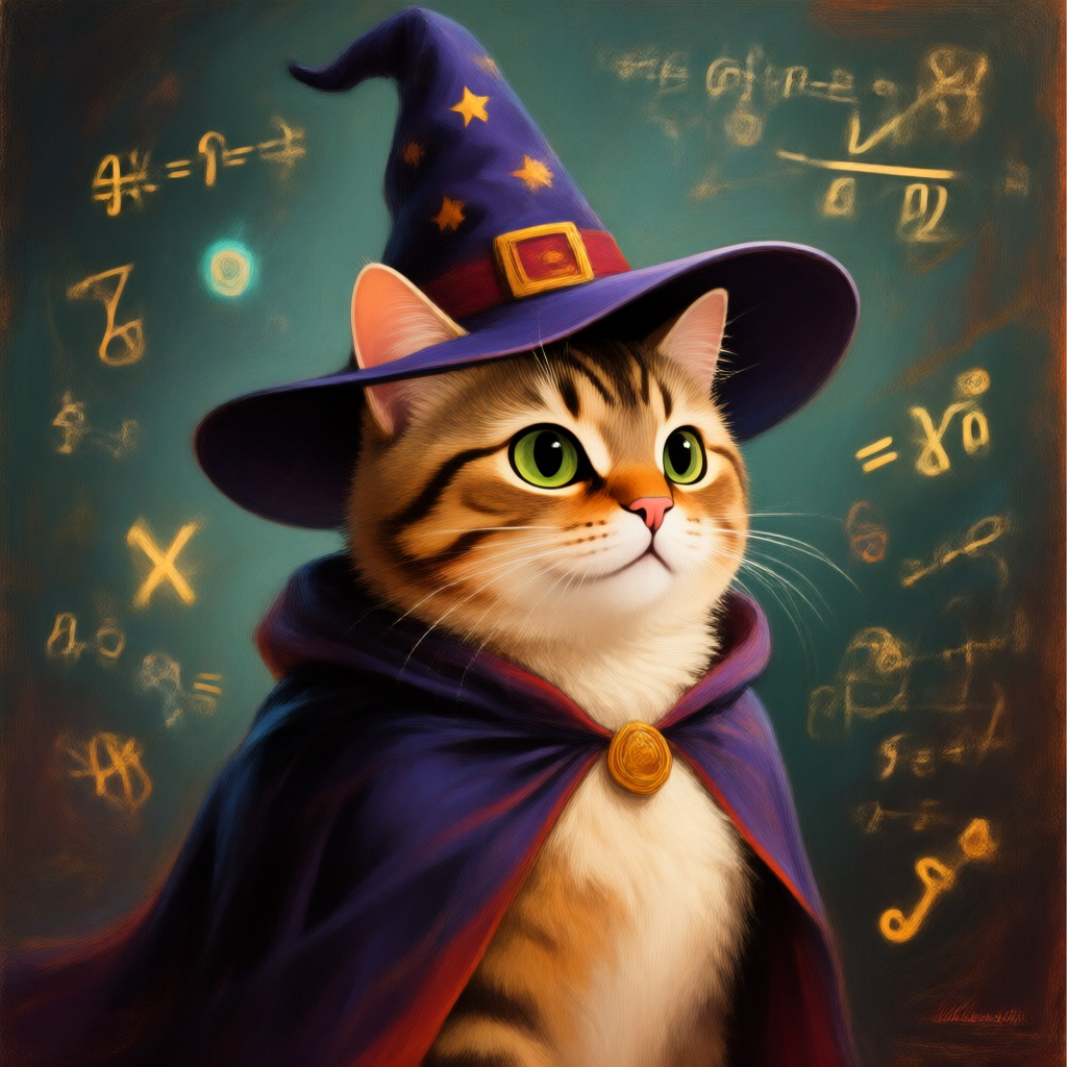}
    \end{minipage}\par\vspace{0.5mm}
    
    \begin{minipage}{0.64\textwidth}
        \includegraphics[width=\textwidth]{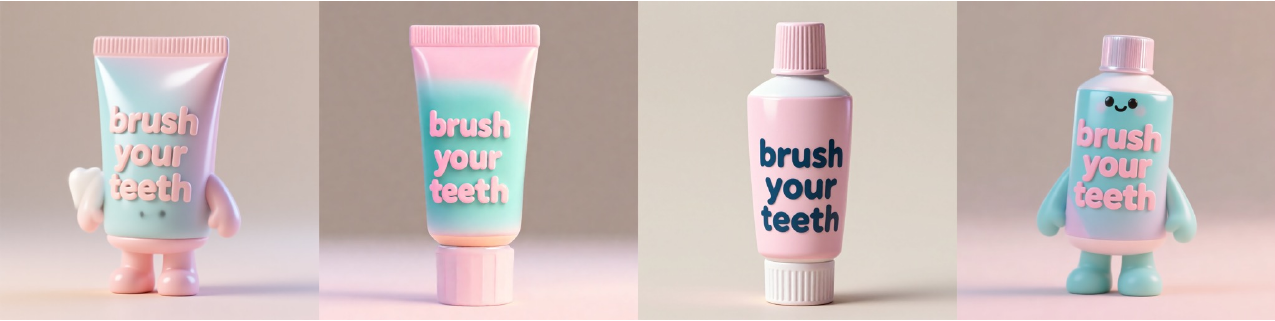}
    \end{minipage}%
    \begin{minipage}{0.02\textwidth}
        \centering
        $\vcenter{\hbox{\tikz[overlay] \draw [dashed, gray, thick] (0, 1.25) -- (0, -1.25);}}$
    \end{minipage}%
    \begin{minipage}{0.32\textwidth}
        \includegraphics[width=0.49\textwidth]{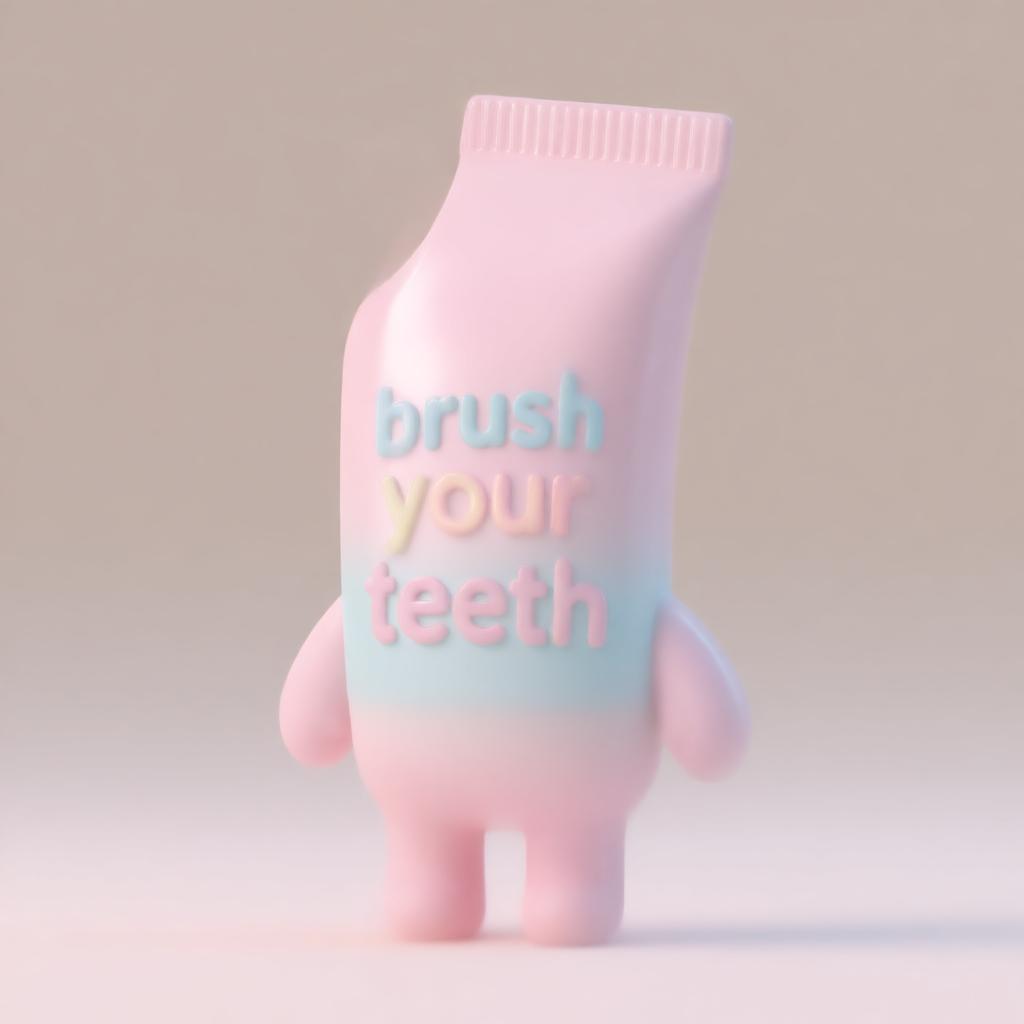}\hfill
        \includegraphics[width=0.49\textwidth]{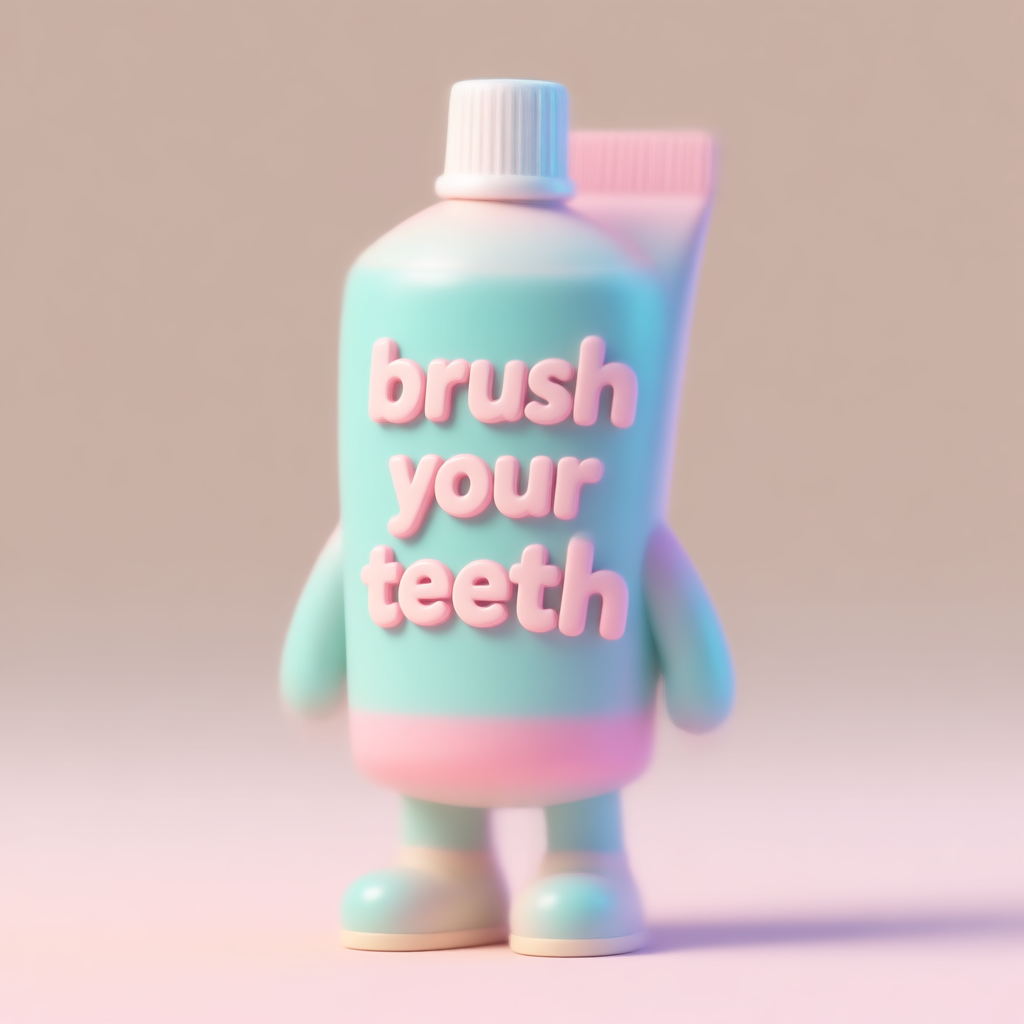}
    \end{minipage}\par\vspace{0.5mm}

    \caption{Qualitative comparison on DPG-Bench. IBFlow simultaneously improves \emph{structural fidelity}, \emph{semantic alignment}, \emph{chromatic fidelity}, and \emph{textural detail} over all 2-step baselines, while the rightmost two columns confirm the necessity of the dynamic target $\tau_{CA}^*$ and dynamic strength $\omega^*(t)$.}
    \vspace{-0.2cm}
    \label{fig:qualitative_dpg}
\end{figure*}

\paragraph{Qualitative comparison.}
Fig.~\ref{fig:qualitative_dpg} compares samples from Qwen-image-Lightning, ArcFlow, TwinFlow and IBFlow, with the two rightmost columns ablating $\tau_{CA}^*$ and $\omega^*(t)$. Static-CA baselines fail along four orthogonal dimensions: collapsed \emph{structural fidelity}, broken \emph{semantic alignment}, distorted \emph{chromatic fidelity}, and degraded \emph{textural detail}. IBFlow simultaneously improves all four, matching the multi-step teacher across structure, semantics, color and texture. Removing $\tau_{CA}^*$ predominantly damages structural fidelity and semantic alignment, while removing $\omega^*(t)$ predominantly damages chromatic fidelity and textural detail, providing a one-to-one visual confirmation of the complementary roles of our dynamic injection of target and strength.

\subsection{Ablation studies}
\label{subsec:ablation}

\begin{table}[t]
  \centering
  \footnotesize 
  \setlength{\tabcolsep}{6.5pt} 
  
  \begin{minipage}[t]{0.32\textwidth}
    \centering
    \caption{Component breakdown on Qwen-Image.}
    \label{tab:ablation_component}
    \begin{tabular}{l c c}
      \toprule
      \textbf{Method} & \textbf{Geneval} & \textbf{DPG} \\
      \midrule
      ArcFlow                  & 0.84 & 87.94 \\
      + Dyn $\tau_{CA}^*$      & 0.85 & 88.37 \\
      + Dyn $\omega^*(t)$      & 0.86 & 88.50 \\
      \rowcolor{gray!10} \textbf{IBFlow} & \textbf{0.86} & \textbf{88.67} \\
      \bottomrule
    \end{tabular}
  \end{minipage}\hfill
  \begin{minipage}[t]{0.30\textwidth}
    \centering
    \caption{Impact of divergence budget $\kappa$.}
    \label{tab:ablation_kappa}
    \begin{tabular}{l c c}
      \toprule
      \textbf{Budget ($\kappa$)} & \textbf{Geneval} & \textbf{DPG} \\
      \midrule
      $\kappa = 0.5$           & 0.85 & 88.16 \\
      $\kappa = 1.0$           & 0.86 & 88.32 \\
      \rowcolor{gray!10} \textbf{$\kappa = 1.5$} & \textbf{0.86} & \textbf{88.67} \\
      $\kappa = 3.0$           & 0.84 & 86.13 \\
      \bottomrule
    \end{tabular}
  \end{minipage}\hfill
  \begin{minipage}[t]{0.35\textwidth}
    \centering
    \caption{Comparison of $\omega$ scheduling.}
    \label{tab:ablation_omega}
    \begin{tabular}{l c c}
      \toprule
      \textbf{Schedule ($\omega$)} & \textbf{Geneval} & \textbf{DPG} \\
      \midrule
      Constant                 & 0.85 & 88.37 \\
      Linear Decay             & 0.85 & 87.82 \\
      Cosine Decay             & 0.86 & 88.16 \\
      \rowcolor{gray!10} \textbf{SNR-Analyt.} & \textbf{0.86} & \textbf{88.67} \\
      \bottomrule
    \end{tabular}
    
  \end{minipage}
  \vspace{-0.5cm}
\end{table}

\paragraph{The two dynamic schedules are complementary.}
Table~\ref{tab:ablation_component} isolates the effect of each dynamic component on Qwen-Image-20B under identical training budget and hyperparameters. Adding the dynamic injection target $\tau_{CA}^*$ alone improves ArcFlow from $0.84$ / $87.94$ to $0.85$ / $88.37$; adding the dynamic injection strength $\omega^*(t)$ alone improves it to $0.86$ / $88.50$; combining both yields the best $0.86$ / \textbf{88.67}. The two schedules thus address complementary failure modes: $\tau_{CA}^*$ prevents the student from bridging an excessive semantic gap during early stages and recovers high-frequency fidelity, whereas $\omega^*(t)$ removes the over-conditioning artifacts predicted by Theorem~\ref{the:omega}.

\paragraph{Choice of the divergence budget $\kappa$.}
Table~\ref{tab:ablation_kappa} sweeps the IB budget $\kappa$, which controls the maximum admissible KL divergence between the current state and the supervisor. A small $\kappa$ over-restricts the injection target and starves the student of useful supervision, while an excessively large $\kappa$ violates the local Fisher-information approximation in Lemma~\ref{lem:KL divergence constraint} and reintroduces distillation instability. The closed-form schedule attains its peak at $\kappa{=}1.5$, where IBFlow recovers the full $0.86$ / \textbf{88.67}.


\section{Conclusion}
\label{sec:conclusion}

In this paper, we proposed IB-Flow, a novel Information Bottleneck-guided dynamic CFG distillation framework that significantly elevates the performance boundary of few-step text-to-image generation. Existing frameworks rely on blind injection paradigm that enforces static guidance strength while indiscriminately sampling the supervisor target. To overcome these bottlenecks, we introduced a dual-track adaptive mechanism: an instance-aware target selection strategy that guarantees conservative macroscopic structural anchoring during early high-entropy stages, and an entropy-aware guidance strength schedule that smoothly decays to the natural manifold to eradicate CFG over-conditioning artifacts. Extensive empirical evaluations across multiple large-scale backbones demonstrate that IB-Flow achieves state-of-the-art generative fidelity under extremely 2-step configurations, effectively restoring structural and chromatic fidelity.

\clearpage
\newpage

\clearpage
{
\small
\bibliographystyle{cite}
\bibliography{main}
}


\appendix

\clearpage
\newpage

\appendix

\section{Proofs of the Information-Bottleneck Formulation}
\label{sec:appendix_proof}

This appendix provides the full proofs of Lemma~\ref{lem:KL divergence constraint}, Lemma~\ref{lem:injection span}, Theorem~\ref{the:tau}, and Theorem~\ref{the:omega} stated in the main text. The presentation strictly follows the order used in Section~\ref{sec:methodology}: each subsection corresponds to exactly one statement, and the labels match those referenced from the main paper.

\subsection{Proof of Lemma~\ref{lem:KL divergence constraint} (Local KL Bound via Fisher Information)}
\label{sec:appendix_lemma1}

We prove that, under the Flow Matching framework, the local KL divergence between marginals at $t$ and $\tau = t + \Delta t$ admits the approximation
\begin{equation*}
    D_{KL}(p_t \| p_\tau) \approx \tfrac{1}{2}\,(\tau-t)^2 \, \|\Delta v_t(x_t)\|_2^2.
\end{equation*}

For continuous normalizing flows, the marginal density $p_t(x_t)$ satisfies the continuity equation
\begin{equation}
    \partial_t p_t + \nabla \cdot (p_t v_t) = 0.
\end{equation}
For Gaussian probability paths used in FM (e.g., OT-Flow or VP-Flow), the predicted vector field $v_t$ is intrinsically coupled with the Stein score $\nabla_{x_t}\log p_t$ via a time-dependent scalar weighting $\sigma_t$:
\begin{equation}
    v_t(x_t) \propto \mathbb{E}[X_1\mid X_t = x_t] - x_t \propto \sigma_t^2\,\nabla_{x_t}\log p_t(x_t).
    \label{eq:appendix_score_relation}
\end{equation}

The CFG residual is $\Delta v_t = v_t(x_t,c) - v_t(x_t,\emptyset)$. Substituting \eqref{eq:appendix_score_relation},
\begin{align}
    \Delta v_t(x_t)
    &\propto \nabla_{x_t}\log p_t(x_t\mid c) - \nabla_{x_t}\log p_t(x_t) \nonumber \\
    &= \nabla_{x_t}\log\frac{p_t(x_t,c)}{p_t(x_t)\,p(c)}
    = \nabla_{x_t}\,\mathrm{PMI}_t(x_t;c),
\end{align}
so $\|\Delta v_t\|_2$ is proportional to the spatial gradient of the pointwise mutual information injected by the condition $c$.

By a second-order Taylor expansion in $\Delta t$, the KL divergence between two infinitesimally close distributions on the same parametric path satisfies
\begin{equation}
    D_{KL}(p_t \| p_{t+\Delta t}) \approx \tfrac{1}{2}(\Delta t)^2 \, \mathcal{I}_F(t),
    \quad
    \mathcal{I}_F(t) = \mathbb{E}_{p_t}\!\left[\,\Big\|\partial_t \log p_t\Big\|^2\right].
\end{equation}
For the CA-driven transition, the dominant contribution to the temporal score variation is the conditional shift induced by $\Delta v_t$. Translating this temporal variation into the spatial PMI gradient gives
\begin{equation}
    D_{KL}^{CA}(p_t\|p_\tau) \approx \tfrac{1}{2}(\tau-t)^2\,\big\|\nabla_{x_t}\mathrm{PMI}_t(x_t;c)\big\|_2^2 \propto \tfrac{1}{2}(\tau-t)^2\,\|\Delta v_t(x_t)\|_2^2,
\end{equation}
which establishes Lemma~\ref{lem:KL divergence constraint}. \qed

\subsection{Proof of Lemma~\ref{lem:injection span} (Admissible Injection Span)}
\label{sec:appendix_lemma2}

Given the IB budget $D_{KL}(p_t\|p_\tau)\le \delta$, we plug the approximation from Lemma~\ref{lem:KL divergence constraint} (proved in Appendix~\ref{sec:appendix_lemma1}) into the constraint:
\begin{equation}
    \tfrac{1}{2}(\tau-t)^2\,\|\Delta v_t(x_t)\|_2^2 \;\le\; \delta.
\end{equation}
Since $(\tau-t)\ge 0$ and $\|\Delta v_t(x_t)\|_2 \ge 0$, taking square roots and rearranging yields
\begin{equation}
    \tau - t \;\le\; \frac{\sqrt{2\delta}}{\|\Delta v_t(x_t)\|_2},
\end{equation}
which is the admissible injection span stated in Lemma~\ref{lem:injection span}. The bound is instance-aware: a large CFG residual (sharp local conditional gradient) tightens the admissible reach, while a small residual permits looking further ahead. \qed

\subsection{Proof of Theorem~\ref{the:tau} (Closed-Form Optimal Injection Target)}
\label{sec:appendix_theorem1}

We prove the closed-form solution
\begin{equation*}
    \tau_{CA}^*(x_t,t) = \min\!\left(1,\; t + \frac{\kappa}{\|v_t^{c}(x_t) - v_t^{u}(x_t)\|_2 + \epsilon}\right),\quad \kappa = \sqrt{2\delta}.
\end{equation*}

Under flow matching, the forward process $\{X_\tau\}$ is a monotone information channel from $X_t$ towards $X_1$, so
\begin{equation}
    \tau_1 \le \tau_2 \;\Rightarrow\; I(X_{\tau_1};X_1\mid X_t) \le I(X_{\tau_2};X_1\mid X_t).
\end{equation}
The IB objective \eqref{eq:ib_optimization} is therefore non-decreasing in $\tau$, and its optimum is attained at the largest $\tau$ permitted by the constraint.

Combining this monotonicity with the admissible span of Lemma~\ref{lem:injection span}, the unconstrained-by-terminal optimum is obtained:
\begin{equation}
    \tau^\diamond = t + \frac{\sqrt{2\delta}}{\|\Delta v_t(x_t)\|_2}.
\end{equation}

The supervisor timestep is bounded above by the terminal time $\tau \le 1$, since $X_1$ is the clean data state. Clipping $\tau^\diamond$ from above and adding a small $\epsilon>0$ to the denominator to prevent division by zero, we obtain
\begin{equation}
    \tau_{CA}^*(x_t,t) = \min\!\left(1,\; t + \frac{\kappa}{\|v_t^{c}(x_t) - v_t^{u}(x_t)\|_2 + \epsilon}\right),
\end{equation}
with $\kappa=\sqrt{2\delta}$, which proves Theorem~\ref{the:tau}. \qed

\subsection{Proof of Theorem~\ref{the:omega} (Closed-Form Optimal Injection Strength)}
\label{sec:appendix_theorem2}

We prove the SNR-driven schedule
\begin{equation*}
    \omega^*(t) = 1 + (\omega_{\max}-1)\cdot\frac{1}{1+\gamma\,\mathrm{SNR}(t)},
    \quad
    \mathrm{SNR}(t)=\frac{t^2}{(1-t)^2}.
\end{equation*}

The IB condition-injection objective \eqref{eq:ib_condition_optimization} contains the generative-fidelity term $I(X_t^{\omega};X_1\mid C)$. By the chain rule of conditional mutual information,
\begin{equation}
    I(X_t^{\omega};X_1\mid C) \;=\; H(X_1\mid C) \;-\; H(X_1\mid X_t^{\omega}, C).
    \label{eq:appendix_mi_chain}
\end{equation}
The first term $H(X_1\mid C)$ is an intrinsic property of the data--condition joint distribution and is therefore independent of the guidance schedule $\omega$. Consequently, the only $\omega$-dependent component of the fidelity term is the negative residual entropy $-H(X_1\mid X_t^{\omega}, C)$, which means the achievable fidelity gain obtained by raising $\omega$ is upper-bounded by the residual uncertainty about $X_1$ left after observing $X_t^{\omega}$ and $C$:
\begin{equation}
    \Delta I(X_t^{\omega};X_1\mid C) \;\le\; H(X_1\mid X_t^{\omega}, C) \;\le\; H(X_1\mid X_t),
    \label{eq:appendix_mi_bound}
\end{equation}
where the second inequality follows because conditioning on additional information (the condition $C$ on top of $X_t$) cannot increase entropy. Equation~\eqref{eq:appendix_mi_bound} shows that the marginal fidelity gain from $\omega$ is governed by $H(X_1\mid X_t)$, the residual uncertainty about the clean data given the current state.

By the I-MMSE relation~\cite{guo2005mutual}, when the forward trajectory is locally modelled as an additive Gaussian noise channel, the posterior variance of estimating $X_1$ from $X_t$ scales inversely with the instantaneous signal-to-noise ratio. Up to a logarithmic factor,
\begin{equation}
    H(X_1\mid X_t) \;\propto\; \frac{1}{1+\mathrm{SNR}(t)}.
    \label{eq:appendix_snr_uncertainty}
\end{equation}

Under the Rectified Flow path $X_t = tX_1 + (1-t)X_0$ with $X_0\sim\mathcal{N}(0,I)$, the signal amplitude is exactly $t$ and the noise amplitude is $1-t$, giving
\begin{equation}
    \mathrm{SNR}(t) = \frac{\|tX_1\|_2^2}{\|(1-t)X_0\|_2^2} = \frac{t^2}{(1-t)^2}.
    \label{eq:appendix_snr_calculation}
\end{equation}
To minimize the IB cost without inducing manifold distortion via over-conditioning, the supplementary CFG drive $\Delta\omega(t) = \omega(t)-1$ should scale with the residual uncertainty in \eqref{eq:appendix_snr_uncertainty}. Anchoring the maximum drive at the high-entropy initial stage with $\omega(0)=\omega_{\max}$ and substituting \eqref{eq:appendix_snr_calculation}, we obtain
\begin{equation}
    \omega^*(t) = 1 + (\omega_{\max}-1)\cdot\frac{1}{1+\gamma\,\mathrm{SNR}(t)},
\end{equation}
with $\gamma>0$ controlling the SNR sensitivity. As $t\to 1$, $\mathrm{SNR}(t)\to\infty$ and $\omega^*(t)\to 1$, so the guidance smoothly retracts to the unconditional manifold and CFG over-conditioning artifacts are eliminated at the source. This proves Theorem~\ref{the:omega}. \qed

\section{Compute Resources}
\label{app:compute}
Each main training in the manuscript is conducted on 32 NVIDIA A100 GPUs with 80GB memory and takes approximately 1.5 days. We report the compute configuration for the experiments in the paper, while additional exploratory runs may require extra compute beyond the final reported settings.

\section{Limitations}
\label{sec:appendix_limitations}

Despite the superior efficiency and fidelity achieved by IB-Flow, several limitations remain to be addressed in future work:

\paragraph{Upper Bound of Teacher Distribution.} 
As a distillation-based framework, IB-Flow primarily focuses on compressing the iterative sampling trajectory of a pre-trained teacher model. While our information-theoretic dynamic schedules significantly reduce over-conditioning artifacts and bridge the semantic gap, the fundamental generative capacity remains upper-bounded by the teacher's distribution and the training corpus. Since we maintain the same dataset as the teacher to ensure a fair comparison, the model does not exhibit a radical shift in the diversity or the underlying data distribution beyond what the teacher has already mastered.

\paragraph{Specific to Flow-based Trajectories.} 
Our current mathematical derivation for the dynamic injection target $\tau_{CA}^*$ and strength $\omega^*$ is optimized for Flow Matching and Rectified Flow frameworks. While the Information Bottleneck principle is universal, extending these closed-form solutions to traditional SDE-based diffusion models might require additional approximations for the non-linear variance schedules.

\section{Broader Impacts}
\label{sec:appendix_impacts}

The development of IB-Flow carries both positive societal benefits and potential ethical risks:

\paragraph{Positive Impacts.} 
By reducing the computational cost of high-quality image generation by $25\times$ to $50\times$, our framework democratizes access to state-of-the-art AI. This significantly lowers the hardware barrier for creative individuals and researchers, reducing the carbon footprint associated with large-scale inference and enabling real-time interactive applications on edge devices.

\paragraph{Negative Impacts.} 
As with any advanced generative technology, the enhanced fidelity and extreme speed of IB-Flow could be misused for the rapid creation of deceptive synthetic media or "Deepfakes." The ability to generate near-photorealistic images in only two steps complicates the task of real-time detection and could potentially be exploited for disinformation campaigns. We encourage the community to integrate robust digital watermarking and provenance standards (e.g., C2PA) to mitigate these risks.



\end{document}